\setlist[itemize]{topsep=0pt, leftmargin=3mm}
\definecolor{richardcolor}{rgb}{0.9, 0.3, 0.1}
\newtheorem{theorem}{Theorem}[section]
\newtheorem{lemma}[theorem]{Lemma}
\newtheorem{definition}[theorem]{Definition}
\newcommand{\parens}[1]{\left( #1 \right)}
\newcommand{\ind}[1]{\mathbbm{1}\parens{#1}}
\newcommand*{\pr}[2][]{\mathbb{P}\ifx\\\left[#1\right]\\\else_{#1}\fi \left[#2\right]}
\newcommand*{\var}[2][]{\text{Var}\ifx\\\left[#1\right]\\\else_{#1}\fi \left[#2\right]}
\icmltitlerunning{Stochastic Flows and Geometric Optimization on the Orthogonal Group}
\begin{document}

\twocolumn[
\icmltitle{Stochastic Flows and Geometric Optimization on the Orthogonal Group}



\icmlsetsymbol{equal}{*}

\begin{icmlauthorlist}
\icmlauthor{Krzysztof Choromanski}{go,co}
\icmlauthor{David Cheikhi}{equal,coo}
\icmlauthor{Jared Davis}{equal,go}
\icmlauthor{Valerii Likhosherstov}{equal,ca}
\icmlauthor{Achille Nazaret}{equal,coo}
\icmlauthor{Achraf Bahamou}{equal,co}
\icmlauthor{Xingyou Song}{equal,go}
\icmlauthor{Mrugank Akarte}{co}
\icmlauthor{Jack Parker-Holder}{ox}
\icmlauthor{Jacob Bergquist}{co}
\icmlauthor{Yuan Gao}{co}
\icmlauthor{Aldo Pacchiano}{be}
\icmlauthor{Tamas Sarlos}{goo}
\icmlauthor{Adrian Weller}{ca,tu}
\icmlauthor{Vikas Sindhwani}{go}
\end{icmlauthorlist}

\icmlaffiliation{co}{Department of Industrial Engineering and Operations Research, Columbia University, New York, USA}

\icmlaffiliation{coo}{Department of Computer Science, Columbia University, New York, USA}

\icmlaffiliation{ca}{Department of Engineering, University of Cambridge, Cambridge, United Kingdom}

\icmlaffiliation{ox}{Department of Engineering, University of Oxford, Oxford, United Kingdom}

\icmlaffiliation{tu}{The Alan Turing Institute, London, United Kingdom}

\icmlaffiliation{go}{Google Brain Robotics, New York, USA}

\icmlaffiliation{goo}{Google Research, Mountain View, USA}

\icmlaffiliation{be}{Department of Computer Science, University of California, Berkeley, USA}

\icmlcorrespondingauthor{Krzysztof Choromanski}{kchoro@google.com}

\icmlkeywords{Machine Learning, ICML}

\vskip 0.3in
]



\printAffiliationsAndNotice{\icmlEqualContribution} 

\begin{abstract}
We present a new class of stochastic, geometrically-driven optimization algorithms on the orthogonal group $O(d)$ and naturally reductive homogeneous manifolds obtained from the action of the rotation group $SO(d)$. We theoretically and experimentally demonstrate that our methods can be applied in various fields of machine learning including deep, convolutional and recurrent neural networks, reinforcement learning, normalizing flows and metric learning. We show an intriguing connection between efficient stochastic optimization on the orthogonal group and graph theory (e.g. matching problem, partition functions over graphs, graph-coloring). We leverage the theory of Lie groups and provide  
theoretical results for the designed class of algorithms. We demonstrate broad applicability of our methods by showing strong performance on the seemingly unrelated tasks of learning world models to obtain  
stable policies for the most difficult $\mathrm{Humanoid}$ agent from $\mathrm{OpenAI}$ $\mathrm{Gym}$ 
and improving convolutional neural networks.
\end{abstract}

\section{Introduction and Related Work}
\label{sec:intro}

Constrained optimization is at the heart of modern machine learning (ML) \cite{bach_1, geist} as many fields of ML benefit from methods such as linear, quadratic, convex or general nonlinear constrained optimization. Other applications include reinforcement learning with safety constraints \cite{ofir}. Our focus in this paper is  \textit{on-manifold} optimization, where constraints require  solutions to belong to certain matrix manifolds. 
We are mainly interested in three spaces: the orthogonal group $\mathcal{O}(d)=\{\mathbf{M} \in \mathbb{R}^{d \times d}:\mathbf{M}^{\top}\mathbf{M}=\mathbf{I}_{d}\}$, its generalization, the \textit{Stiefel manifold} $\mathcal{ST}(d, k)=\{\mathbf{M} \in \mathbb{R}^{d \times k}:\mathbf{M}^{\top}\mathbf{M}=\mathbf{I}_{k}\}$ for $k \leq d$ 
and its subgroup $\mathcal{SO}(d)=\{\mathbf{M} \in \mathbb{R}^{d \times d}:\mathbf{M}^{\top}\mathbf{M}=\mathbf{I}_{d},\mathrm{det}(\mathbf{M})=1\}$ of $d$-dimensional rotations.

Interestingly, a wide variety of constrained optimization problems can be rewritten as optimization on matrix manifolds defined by orthogonality constraints. These include:
\begin{itemize}
\item \textbf{metric learning:} optimization leveraging the decomposition of $k$-rank Mahalanobis   matrices $\mathbf{M}=\mathbf{U}\mathbf{D}_{+}\mathbf{U}^{\top}$, where: $\mathbf{U} \in \mathcal{ST}(d,k)$ and $\mathbf{D}_{+}$ is diagonal with positive nonzero diagonal entries \cite{ankita};
\item \textbf{synchronization over the special Euclidean group}, where its elements $g_{i} \in \mathcal{SE}(d)=\mathbb{R}^{d} \times \mathcal{SO}(d)$ need to be retrieved from noisy pairwise measurements encoding the elements $g_{j}g_{i}^{-1}$ that transform $g_{i}$ to $g_{j}$; these algorithms find applications particularly in vision and robotics (SLAM) \cite{synchronization};
\item \textbf{PSD programs with block-diagonal constraints:} optimization on $\mathcal{ST}(d,k)$ can be elegantly applied for finding bounded-rank solutions to positive semidefinite (PSD) programs, where the PSD matrix $\mathbf{Z}$ can be decomposed as: $\mathbf{Z} = \mathbf{R}^{\top}\mathbf{R}$ for a low-rank matrix $\mathbf{R} \in \mathbb{R}^{r \times d}$ ($r \ll d$) and where $r$ can be systematically increased leading effectively to the so-called \textit{Riemannian staircase} method;
\item \textbf{combinatorial optimization:} combinatorial problems involving object-rearrangements such as sorting or graph isomorphism \cite{pappas} can be cast as optimization on the orthogonal group $O(d)$ which is a relaxation of the permutation group $\mathrm{Perm}(d)$.
\end{itemize}

Orthogonal groups are deeply rooted in the theory of manifolds. As noted in \cite{gallier} "\textit{ 
...most
familiar spaces are naturally reductive manifolds. Remarkably, they all arise from some suitable action of the rotation group $\mathcal{SO}(d)$, a Lie group, which emerges as the master player.}" These include in particular the Stiefel and Grassmann~manifolds.

It is striking that 
the group $O(d)$ and its ``relatives" 
provide solutions to many challenging problems in machine learning, where unstructured (i.e. unconstrained) algorithms do exist. 
Important examples include the training of recurrent and convolutional neural networks, and normalizing flows.  \textit{Sylvester normalizing flows} are used to generate complex probabilistic distributions and encode invertible transformations between density functions using matrices taken from $\mathcal{ST}(d,k)$ \cite{tomczak}. Orthogonal convolutional layers were  recently shown to improve training models for vision data \cite{ortcnn_1, BansalCW18}. Finally, the learning of deep/recurrent neural network models is notoriously difficult due to the \textit{vanishing/exploding gradient} problem \cite{frasconi}. To address this challenge, several architectures such as $\mathrm{LSTM}$s and $\mathrm{GRU}$s were proposed \cite{lstm, gru} but  
don't provide guarantees that gradients' norms would stabilize. More recently orthogonal RNNs (ORNN) that do provide such guarantees were introduced. ORNNs impose orthogonality constraints on hidden state transition matrices \cite{unitary, ortho_2, ortho_1}.
Training RNNs can now be seen as optimization on matrix manifolds with orthogonality constraints. 

However, training orthogonal RNNs is not easy and reflects challenges common for general on-manifold optimization \cite{absil, hairer, edelman}. Projection methods that work by mapping the unstructured gradient step back into the manifold have expensive, cubic time complexity. Standard geometric \textit{Riemannian-gradient} methods relying on steps conducted in the space tangent to the manifold at the point of interest and translating updates back to the manifold via \textit{exponential} or  \textit{Cayley} mapping \cite{ortho_1} still require cubic time for those transformations.
Hence, in practice orthogonal optimization becomes problematic in higher-dimensional settings.

Another problem with these transforms is that apart from computational challenges, they also incur numerical instabilities ultimately leading to solutions diverging from the manifold. We demonstrate this in Sec. \ref{sec:experiments} and provide additional evidence in the Appendix (Sec. \ref{sec:exp_prob}, \ref{app:instability}), in particular on a simple $16$-dimensional combinatorial optimization task.

An alternative approach is to parameterize orthogonal matrices by unconstrained parameters and conduct standard gradient step in the parameter space. Examples include decompositions into short-length products of Householder reflections and more \cite{rahman, jing}. Such methods enable fast updates, but inherently lack representational capacity by restricting the class of representable matrices and thus are not a subject of our work.

In this paper we propose a new class of optimization algorithms on matrix manifolds defined by orthogonality constraints, in particular: the orthogonal group $\mathcal{O}(d)$, its subgroup of rotations $\mathcal{SO}(d)$ and Stiefel manifold $\mathcal{ST}(d,k)$.

We highlight the following contributions:
\vspace{-4.0mm}
\begin{enumerate}[wide, labelwidth=!, labelindent=0pt]
    \item \textbf{Fast Optimization:} We present the first on-manifold stochastic gradient flow optimization algorithms for ML with \textbf{sub-cubic} time complexity per step as opposed to \textbf{cubic} characterizing SOTA (Sec.~\ref{time_complexity}) that do not constrain optimization to strictly lower-dimensional subspaces. 
    That enables us to improve training speed without compromising the representational capacity of the original Riemannian methods. We obtain additional computational gains (Sec.~\ref{sec:factorization}) by parallelizing our algorithms.
    \vspace{-1.5mm}
    \item \textbf{Graphs vs. Orthogonal Optimization:} To obtain these, we explore intriguing connections between stochastic optimization on the orthogonal group and graph theory (in particular the matching problem, its generalizations, partition functions over graphs and graph coloring). By leveraging structure of the Lie algebra of the orthogonal group, we map points from the rotation group to weighted graphs  (Sec. \ref{sec:algorithm}). 
    \vspace{-1.5mm}
    \item \textbf{Convergence of Stochastic Optimizers:} We provide rigorous theoretical guarantees showing that our methods converge for a wide class of functions $F:\mathcal{M} \rightarrow \mathbb{R}$ defined on $\mathcal{O}(d)$ under moderate regularity assumptions (Sec. \ref{sec:convergence}).
    \vspace{-1.5mm}
    \item \textbf{Wide Range of Applications:} We confirm our theoretical results by conducting a broad set of experiments ranging from training RL policies and RNN-based world models \cite{world_models} (Sec. \ref{sec:rl}) to improving vision CNN-models (Sec. \ref{sec:cnn}). In the former, our method is the only one that trains stable and effective policies for the high-dimensional $\mathrm{Humanoid}$ agent from $\mathrm{OpenAI}$ $\mathrm{Gym}$. We carefully quantified the impact of our algorithms. The RL experiments involved running $240$ training jobs, each distributed over hundreds of machines. We also demonstrated numerical instabilities of standard non-stochastic techniques (Sec. \ref{sec:exp_prob} and Sec. \ref{app:instability}).    
\end{enumerate}
\vspace{-3mm}

Our algorithm can be applied in particular in the blackbox setting \cite{salimans}, where function to be optimized is accessible only through potentially expensive querying process. We demonstrate it in Sec. \ref{sec:rl}, where we simultaneously train the RNN-model and RL-policy taking advantage of it through its latent states.
Full proofs of our theoretical results are in the Appendix.

\vspace{-3mm}
\section{The Geometry of the Orthogonal Group}
\label{sec:manifold}

In this section we provide the reader with technical content used throughout the paper. The theory of matrix manifolds is vast so we focus on concepts needed for the exposition of our results. We refer to \citet{lee} for a more thorough introduction to the theory of smooth manifolds.

\begin{definition}[manifold embedded in $\mathbb{R}^{n}$]
Given $n, d \in \mathbb{Z}_{\geq 1}$ with $n \geq d$, a $d$-dimensional smooth manifold in $\mathbb{R}^{n}$ is a nonempty set $\mathcal{M} \subseteq \mathbb{R}^{n}$ such that
for every $\mathbf{p} \in \mathcal{M}$ there are two open sets $\Omega \subseteq \mathbb{R}^{d}$ and $U \subseteq \mathcal{M}$ with $\mathbf{p} \in U$ and a smooth function $\phi:\Omega \rightarrow \mathbb{R}^{n}$ (called local parametrization of $\mathcal{M}$ at $\mathbf{p}$) such that $\phi$ is a homeomorphism between $\Omega$ and $U=\phi(\Omega)$, and $\mathbf{d} \phi(t_{0})$ is injective, where $t_{0}=\phi^{-1}(\mathbf{p})$.
\end{definition}

Matrix manifolds $\mathcal{M} \subseteq \mathbb{R}^{N \times N}$ are included in the above as embedded in $\mathbb{R}^{N \times N}$ after vectorization. 
Furthermore, they are usually equipped with additional natural group structure. This enables us to think about them as \textit{Lie groups} (see \citealp{lee}) that are smooth manifolds with smooth group operation. For the matrix manifolds considered here, the group operation is always standard matrix multiplication.

One of the key geometric concepts for on-manifold optimization is the notion of the tangent space.

\begin{definition} [tangent space $\mathcal{T}_{\mathbf{p}}(\mathcal{M})$]
The tangent space $\mathcal{T}_{\mathbf{p}}(\mathcal{M})$ to a smooth manifold $\mathcal{M} \subseteq \mathbb{R}^{n}$ at point $\mathbf{p} \in \mathcal{M}$ is a space of all vectors $\mathbf{v}=\gamma^{\prime}(0)$, where $\gamma:(-1,1) \rightarrow \mathcal{M}$ is a smooth curve on $\mathcal{M}$ such that $\mathbf{p}=\gamma(0)$.
\end{definition}
We refer to Sec. \ref{app:smooth} for a rigorous definition of smooth curves on manifolds. It is not hard to see that $\mathcal{T}_{\mathbf{p}}(\mathcal{M})$ is a vector space of the same dimensionality as $\mathcal{M}$ (from the injectivity of $\mathbf{d} \phi$ in $\phi^{-1}(\mathbf{p})$) that can be interpreted as a local linearization of $\mathcal{M}$. Thus it is not surprising that mapping from tangent spaces back to $\mathcal{M}$ will play an important role in on-manifold optimization (see: Subsection \ref{mappings}).

For a Lie group $\mathcal{M} \subseteq \mathbb{R}^{N \times N}$, we call the tangent space at $\mathbf{I}_{N}$ the corresponding \textit{Lie algebra}.

\subsection{Manifold $\mathcal{O}(d)$ and $\mathcal{ST}(d,k)$}

For every $\mathbf{X} \in \mathcal{ST}(d,k)$, there exists a matrix, which we call $\mathbf{X}_{\perp}$, such that  
$[\mathbf{X}\mathbf{X}_{\perp}] \in \mathcal{O}(d)$, where $[]$ stands for matrix concatenation. In fact such a matrix can be chosen in more than one way if $k < d$ (for $k=d$ we take $\mathbf{X}_{\perp}=\emptyset$).

\begin{lemma}
\label{tangent_lemma}
For every $\mathbf{X} \in \mathcal{ST}(d,k)$, the tangent space $\mathcal{T}_{\mathbf{X}}(\mathcal{M})$, where $\mathcal{M} = \mathcal{ST}(d,k)$ and $\mathrm{Sk}(k)$ stands for skew-symmetric (antisymmetric) matrices, satisfies \cite{lee}:
$$
\mathcal{T}_{\mathbf{X}}(\mathcal{M}) = 
\{\mathbf{XA}+\mathbf{X}_{\perp}\mathbf{B}: \mathbf{A} \in \mathrm{Sk}(k), \mathbf{B} \in \mathbb{R}^{(d-k) \times k}\}.
$$
\end{lemma}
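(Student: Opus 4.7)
My plan is to prove both inclusions, starting with the easier $\subseteq$ direction, which follows from differentiating the defining constraint of the Stiefel manifold. Let $\gamma:(-1,1)\to\mathcal{ST}(d,k)$ be a smooth curve with $\gamma(0)=\mathbf{X}$ and $\gamma'(0)=\mathbf{v}$. Since $\gamma(t)^{\top}\gamma(t)=\mathbf{I}_{k}$ for every $t$, differentiating at $t=0$ gives the algebraic constraint $\mathbf{v}^{\top}\mathbf{X}+\mathbf{X}^{\top}\mathbf{v}=\mathbf{0}$. Because $[\mathbf{X}\,\mathbf{X}_{\perp}]\in\mathcal{O}(d)$, the columns of $\mathbf{X}$ and $\mathbf{X}_{\perp}$ together form an orthonormal basis of $\mathbb{R}^{d}$, so I can expand the tangent vector uniquely as $\mathbf{v}=\mathbf{X}\mathbf{A}+\mathbf{X}_{\perp}\mathbf{B}$ with $\mathbf{A}=\mathbf{X}^{\top}\mathbf{v}\in\mathbb{R}^{k\times k}$ and $\mathbf{B}=\mathbf{X}_{\perp}^{\top}\mathbf{v}\in\mathbb{R}^{(d-k)\times k}$. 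Substituting back into the constraint yields $\mathbf{A}+\mathbf{A}^{\top}=\mathbf{0}$, so $\mathbf{A}\in\mathrm{Sk}(k)$.

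For the reverse inclusion $\supseteq$, I would exhibit an explicit smooth curve realizing any prescribed pair $(\mathbf{A},\mathbf{B})$. The natural construction is to lift the tangent direction to a skew-symmetric block in the Lie algebra of $\mathcal{O}(d)$, exponentiate it, and restrict. Concretely, define
$$
\mathbf{S}=\begin{bmatrix} \mathbf{A} & -\mathbf{B}^{\top} \\ \mathbf{B} & \mathbf{0} \end{bmatrix}\in\mathrm{Sk}(d),\qquad \mathbf{E}=\begin{bmatrix}\mathbf{I}_{k}\\ \mathbf{0}\end{bmatrix},
$$
and set $\gamma(t)=[\mathbf{X}\,\mathbf{X}_{\perp}]\exp(t\mathbf{S})\mathbf{E}$. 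Since $\exp(t\mathbf{S})\in\mathcal{O}(d)$, the matrix $[\mathbf{X}\,\mathbf{X}_{\perp}]\exp(t\mathbf{S})$ is orthogonal, so its first $k$ columns lie in $\mathcal{ST}(d,k)$, giving $\gamma(t)\in\mathcal{ST}(d,k)$ for all $t$. A direct check gives $\gamma(0)=\mathbf{X}$, and differentiating at $t=0$ yields $\gamma'(0)=[\mathbf{X}\,\mathbf{X}_{\perp}]\mathbf{S}\mathbf{E}=\mathbf{X}\mathbf{A}+\mathbf{X}_{\perp}\mathbf{B}$, as required.

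I do not anticipate a real obstacle here; the only subtle point is the choice of the antisymmetric off-diagonal block $-\mathbf{B}^{\top}$ in $\mathbf{S}$, which is forced both by the skew-symmetry needed to keep $\exp(t\mathbf{S})$ orthogonal and by the requirement that the lower-left block of $\mathbf{S}$ produce exactly $\mathbf{B}$ after right-multiplication by $\mathbf{E}$. As an optional sanity check I would verify the dimension count: $\dim\mathrm{Sk}(k)+(d-k)k=\tfrac{k(k-1)}{2}+(d-k)k=dk-\tfrac{k(k+1)}{2}$, which matches the known dimension of $\mathcal{ST}(d,k)$. Together with either inclusion this already forces equality, since $\mathcal{T}_{\mathbf{X}}(\mathcal{M})$ is a vector space of that same dimension, but it is cleaner to prove both inclusions directly as above.
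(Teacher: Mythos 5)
Your proof is correct and complete. Note that the paper does not actually supply a proof of this lemma: it is stated as a standard fact with a citation to \citet{lee}, so there is no paper proof to compare against. Your argument is the standard textbook one for characterizing the tangent space of a Stiefel manifold: the $\subseteq$ direction by differentiating the constraint $\gamma(t)^{\top}\gamma(t)=\mathbf{I}_k$ and decomposing $\mathbf{v}$ in the orthonormal basis given by the columns of $[\mathbf{X}\,\mathbf{X}_{\perp}]$, and the $\supseteq$ direction by exhibiting the curve $\gamma(t)=[\mathbf{X}\,\mathbf{X}_{\perp}]\exp(t\mathbf{S})\mathbf{E}$. Both directions check out, and the closing dimension count $\frac{k(k-1)}{2}+(d-k)k=dk-\frac{k(k+1)}{2}$ is a useful sanity check that would also let you shortcut one inclusion if desired.
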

\vspace{-3mm}
We conclude that tangent spaces for the orthogonal group $\mathcal{O}(d)$ are of the form: $\{\mathbf{U}\Omega: \Omega \in \mathrm{Sk}(d)\}$, where $\mathbf{U} \in \mathcal{O}(d)$
and $\mathrm{Sk}(d)$ constitutes its Lie algebra with basis $\mathcal{H}_d$ consisting of matrices $\left(\mathbf{H}_{i,j}\right)_{1 \leq i<j \leq d}$ s.t. $\mathbf{H}_{i,j}[i,j]=-\mathbf{H}_{i,j}[j,i]=1$ and $\mathbf{H}_{i,j}[k,l]=0$ if $\{k,l\} \neq \{i,j\}$.

Smooth manifolds whose tangent spaces are equipped with inner products (see: Sec. \ref{app:inner} for more details) are called \textit{Riemannian manifolds} \cite{lee}. Inner products provide a means to define a metric and talk about non-Euclidean distances between points on the manifold (or equivalently: lengths of geodesic lines connecting points on the manifold). 

\subsubsection{Exponential mapping \& Cayley transform}
\label{mappings}

Points $\mathbf{U}\Omega$ of a tangent space $\mathcal{T}_{\mathbf{U}}(\mathcal{O}(d))$ can be mapped back to $\mathcal{O}(d)$ via matrix exponentials by the mapping $\mathbf{U} \mathrm{exp}(t\Omega)$ for $t \in \mathbb{R}$.
Curves $\gamma_{\Omega}: \mathbb{R} \rightarrow \mathcal{O}(d)$ defined as $\gamma_{\Omega}(t) = \mathbf{U} \mathrm{exp}(t\Omega)$ are in fact geodesics on $\mathcal{O}(d)$ tangent to $\mathbf{U}\Omega$ in $\mathbf{U}$.
The analogue of the set of canonical axes are the geodesics induced by the canonical basis $\mathcal{H}_d$. The following lemma describes these geodesics.
\begin{lemma}[Givens rotations and geodesics]
The geodesics on $\mathcal{O}(d)$ induced by matrices $\mathbf{H}_{i,j}$ are of the form $\mathbf{U}\mathbf{G}_{i,j}^{t}$, where $\mathbf{G}_{i,j}^{t}$ is a $t$-angle Givens rotation in the $2$-dimensional space $\mathrm{Span}\{\mathbf{e}_{i},\mathbf{e}_{j}\}$  defined as:
$\mathbf{G}_{i,j}^{t}[i,i]=\mathbf{G}_{i,j}^{t}[j,j]=\cos(t), \mathbf{G}_{i,j}^{t}[i,j] = -\mathbf{G}_{i,j}^{t}[j,i]=\sin(t)$ and $\mathbf{G}_{i,j}^{t}[k,l]=\mathbf{I}_{d}[k,l]$ for $(k,l) \notin \{(i,i),(i,j),(j,i),(j,j)\}$.
\end{lemma}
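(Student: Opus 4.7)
The plan is to invoke the fact, already recorded just before the statement, that geodesics on $\mathcal{O}(d)$ tangent to $\mathbf{U}\Omega$ at $\mathbf{U}$ have the form $\gamma_{\Omega}(t)=\mathbf{U}\,\mathrm{exp}(t\Omega)$. Taking $\Omega=\mathbf{H}_{i,j}$, the statement reduces to the purely computational identity $\mathrm{exp}(t\mathbf{H}_{i,j})=\mathbf{G}_{i,j}^{t}$, so the entire argument boils down to summing a matrix exponential in closed form.

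The main step is to exhibit a short recursion for the powers of $\mathbf{H}_{i,j}$. A direct computation from the definition (only the entries $(i,j)$ and $(j,i)$ are nonzero) shows that $\mathbf{H}_{i,j}^{2}=-\mathbf{E}_{i,j}$, where $\mathbf{E}_{i,j}$ is the diagonal matrix with $1$ in positions $(i,i)$ and $(j,j)$ and $0$ elsewhere; this is just the orthogonal projector onto $\mathrm{Span}\{\mathbf{e}_{i},\mathbf{e}_{j}\}$. Because every row and column of $\mathbf{H}_{i,j}$ is supported on $\{i,j\}$, we have $\mathbf{E}_{i,j}\mathbf{H}_{i,j}=\mathbf{H}_{i,j}$, hence $\mathbf{H}_{i,j}^{3}=-\mathbf{H}_{i,j}$. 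By a trivial induction this gives, for $k\geq 1$,
\[
\mathbf{H}_{i,j}^{2k}=(-1)^{k}\mathbf{E}_{i,j},\qquad \mathbf{H}_{i,j}^{2k+1}=(-1)^{k}\mathbf{H}_{i,j}.
\]

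Plugging this into the power series $\mathrm{exp}(t\mathbf{H}_{i,j})=\sum_{k\geq 0}\frac{t^{k}}{k!}\mathbf{H}_{i,j}^{k}$ and separating the even and odd indices, the odd terms collapse to $\sin(t)\mathbf{H}_{i,j}$ and the even terms (with $k\geq 1$) collapse to $(\cos(t)-1)\mathbf{E}_{i,j}$, leaving
\[
\mathrm{exp}(t\mathbf{H}_{i,j})=\mathbf{I}_{d}+\sin(t)\mathbf{H}_{i,j}+(\cos(t)-1)\mathbf{E}_{i,j}.
\]
I would then finish by reading this off entry-by-entry: the $(i,i)$ and $(j,j)$ entries become $1+(\cos(t)-1)=\cos(t)$; the $(i,j)$ entry becomes $\sin(t)\cdot\mathbf{H}_{i,j}[i,j]=\sin(t)$; symmetrically the $(j,i)$ entry is $-\sin(t)$; all other entries coincide with those of $\mathbf{I}_{d}$. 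This matches the defining formula of $\mathbf{G}_{i,j}^{t}$ exactly, so $\mathbf{U}\mathrm{exp}(t\mathbf{H}_{i,j})=\mathbf{U}\mathbf{G}_{i,j}^{t}$.

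There is no serious obstacle here: the only place to be careful is the sign bookkeeping in the recursion for $\mathbf{H}_{i,j}^{k}$ and in the identification of the $(j,i)$-entry, since the sign convention in the definition of $\mathbf{G}_{i,j}^{t}$ (namely $\mathbf{G}_{i,j}^{t}[j,i]=-\sin(t)$) must be matched with the antisymmetry $\mathbf{H}_{i,j}[j,i]=-1$. Everything else is the standard $2\times 2$ specialization of the classical fact that the exponential of a skew-symmetric matrix is a rotation, restricted to the plane $\mathrm{Span}\{\mathbf{e}_{i},\mathbf{e}_{j}\}$ where the action is nontrivial.
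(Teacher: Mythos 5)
Your proof is correct and complete, and it is the natural (and essentially only) argument: reduce the lemma to the identity $\exp(t\mathbf{H}_{i,j})=\mathbf{G}_{i,j}^{t}$ via the stated form $\gamma_\Omega(t)=\mathbf{U}\exp(t\Omega)$ of geodesics on $\mathcal{O}(d)$, then close the matrix exponential in closed form using $\mathbf{H}_{i,j}^2=-\mathbf{E}_{i,j}$ and $\mathbf{H}_{i,j}^3=-\mathbf{H}_{i,j}$. The paper states this lemma without proof as an elementary fact, so there is nothing to compare against; your computation, including the sign bookkeeping at the $(j,i)$ entry, is accurate and finishes the argument.
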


All introduced geometric concepts are illustrated in Fig. \ref{fig:manifold}.

\vspace{-4mm}
\begin{figure}[h]
  \label{fig:benchmark}
  \centering
	\includegraphics[keepaspectratio, width=0.32\textwidth]{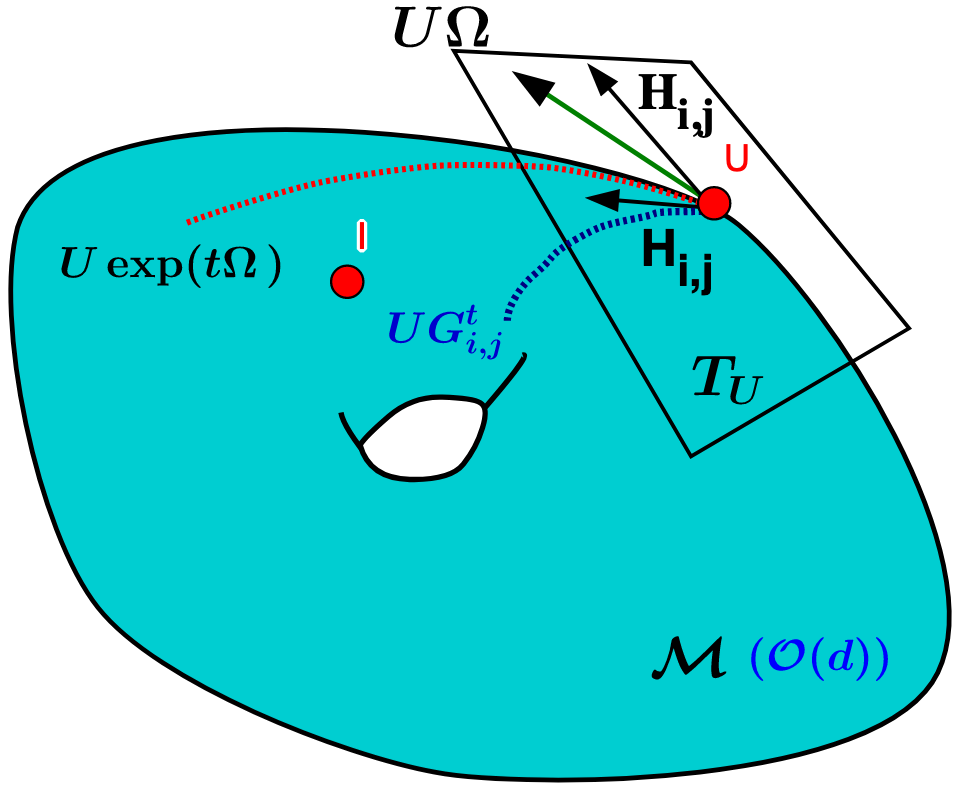}
  \caption{Illustration of basic geometric manifold concepts: tangent vector space $\mathcal{T}_{\mathbf{U}}$ at point $\mathbf{U} \in \mathcal{M}$. If $\mathcal{M}=\mathcal{O}(d)$ then 
  $\mathcal{T}_{\mathbf{U}} = \{\mathbf{U}\Omega: \Omega \in \mathrm{Sk}(d)\}$ and geodesics are of the form $\mathbf{U}\mathrm{exp}(t\Omega)$ for $t \in \mathbb{R}$. Geodesics tangent to points $\mathbf{U}\mathbf{H}_{i,j}$ are of the form $\mathbf{U}\mathbf{G}^{t}_{i,j}$.}
\label{fig:manifold}  
\end{figure}

For the general Stiefel manifold $\mathcal{ST}(d,k)$, exponentials can also be used to map skew-symmetric matrices to curves on $\mathcal{ST}(d,k)$ as follows: $\gamma_{\Omega}(t) = \mathrm{exp}(t\Omega)\mathbf{X}$, where $\mathbf{X} \in \mathcal{ST}(d,k)$ and $\mathrm{exp}$ is mapping $\mathrm{Sk}(d)$ to $\mathcal{O}(d)$. 
Analogously, curves $\gamma_{\Omega}$ are tangent to $\Omega \mathbf{X}$ in $\mathbf{X}$.
The exponential map is sometimes replaced by the \textit{Cayley transform} defined as:
$
Y(\Omega) = \left(\mathbf{I}+\frac{\Omega}{2}\right)^{-1}\left(\mathbf{I}-\frac{\Omega}{2}\right). 
$
And again, it can be shown that curves $\gamma_{\Omega}$ defined as:
$\gamma_{\Omega}(t)=Y(t\Omega)\mathbf{X}$
are tangent to $\Omega \mathbf{X}$ in $\mathbf{X}$
(since: $\gamma^{\prime}(0)=-\Omega\mathbf{X}$),
even though they are no longer geodesics.

\subsubsection{Riemannian optimization on $\mathcal{ST}(d,k)$}
\label{riemann}
\vspace{-1mm}
Consider an optimization problem of the form:
\begin{equation}
\max_{\mathbf{X} \in \mathbb{R}^{d \times k}, \mathbf{X}^{\top}\mathbf{X}=\mathbf{I}_{k}} F(\mathbf{X})
\end{equation}
\vspace{-1mm}
for a differentiable function: $F:\mathbb{R}^{d \times k} \rightarrow \mathbb{R}$.

Notice that the standard directional derivative functional $DF_{\mathbf{X}}:\mathbb{R}^{d \times k} \rightarrow \mathbb{R}$ satisfies:
$DF_{\mathbf{X}}(\mathbf{Z}) = \sum_{i,j} \frac{\partial F}{\partial \mathbf{X}[i,j]}\mathbf{Z}_{i,j}=\mathrm{tr}(\mathbf{G^{\top}\mathbf{Z}})$, where $\mathbf{G}$ is a standard gradient matrix.
To extend gradient-based techniques to on-manifold optimization, we need to:
\vspace{-2mm}
\begin{enumerate}
    \item compute the projection of $\mathbf{G}$ into the tangent space $\mathcal{T}_{\mathbf{X}}(\mathcal{ST}(d,k))$, which 
    we call \textit{Riemannian gradient},
    \vspace{-2mm}
    \item use curves on $\mathcal{ST}(d,k)$ tangent to the Riemannian gradient to make updates on the manifold.
\vspace{-2mm}
\end{enumerate}

It turns out that to make updates on the manifold, it will suffice to use the curves $\gamma_{\Omega}$ defined above. It remains to compute Riemannian gradients and corresponding skew-symmetric matrices $\Omega$.  Using standard representation theorems (see: \ref{app:representations}), it can be proven (under canonical inner product, see: Sec. \ref{app:inner}) that the Riemannian gradient is of the form $\mathcal{R} = \Omega\mathbf{X}$, where $\Omega \in \mathrm{Sk}(d)$ satisfies:
\begin{equation}
\label{omega}
\vspace{-3mm}
\Omega=\Omega(\mathbf{X},\mathbf{G}) = \mathbf{GX}^{\top} - \mathbf{XG}^{\top}.  
\end{equation}

We are ready to conclude that for $\mathbf{X}(t) \in \mathcal{ST}(d,k)$, a differential equation (DE):
\begin{equation}
\label{diff}
\mathbf{\dot{X}}(t)= \Omega(t) \mathbf{X}(t),
\end{equation}
where $\Omega(t) = \mathbf{G}(t)\mathbf{X}(t)^{\top} - \mathbf{X}(t)\mathbf{G}(t)^{\top}$ and $\mathbf{G}(t)$ is a standard gradient matrix (for unconstrained optimization) in $\mathbf{X}(t)$ for a given function $F:\mathcal{ST}(d,k) \rightarrow \mathbb{R}$ to be maximized, 
defines a gradient-flow on $\mathcal{ST}(d,k)$.
The flow can be discretized and integrated with the use of either the exponential map or the Cayley transform $Y$ as follows:
\begin{equation}
\label{integrate}
\mathbf{X}_{i+1} = \Gamma(\eta \Omega(\mathbf{X}_{i},\mathbf{G}_{i}))\mathbf{X}_{i},    
\end{equation}
where $\mathbf{G}_{i}$ is a standard gradient matrix in $\mathbf{X}_{i}$, $\Gamma = Y$ or $\Gamma = \mathrm{exp}$ and $\eta>0$ is a step size. 

Eq. \ref{integrate} leads to an optimization algorithm with \textbf{cubic} time per optimization step (both exponential and Cayley transforms require cubic time), prohibitive in higher-dimensional 
applications. We propose our solution in the next section.

Interestingly, if $\mathcal{ST}(d,k)=\mathcal{O}(d)$ (i.e. $d=k$), the order of terms in the RHS of Equation \ref{diff} can be reversed and such
DE also defines a gradient flow for $\Omega(\mathbf{X},\mathbf{G}) = \mathbf{X}^{\top}\mathbf{G}-\mathbf{G}^{\top}\mathbf{X}$ that can be integrated via Equation \ref{integrate}, with the order of terms in RHS reversed. We refer to it as a \textit{reverse form}.

\vspace{-3mm}
\section{Graph-Driven On-Manifold Optimization}
\label{sec:algorithm}

Our algorithms provide an efficient way of computing discretized flows given by Eq. \ref{diff}, by proposing graph-based techniques for time-efficient stochastic approximations of updates from Eq. \ref{integrate}. We focus on $\Gamma$ being an exponential map since it has several advantages over Cayley transform (in particular, it is surjective onto the space of all rotations).

We aim to replace $\Omega(\mathbf{X}_{i},\mathbf{G}_{i})$ with an unbiased sparser low-variance stochastic structured estimate $\widehat{\Omega}$. This will enable us to replace {cubic} time complexity of an update with {sub-cubic}. Importantly, an unbiased estimator of $\Omega$ does not lead to an unbiased estimator of $\mathbf{X}_{i+1}$. However, as seen from Taylor expansion, the bias can be controlled by the step size $\eta$ and in practice will not hurt accuracy. We verify this claim both  theoretically (Sec. \ref{sec:convergence}) by providing {strong convergence results}, and empirically (Sec. \ref{sec:experiments}).

A key observation linking our setting with graph theory is that elements of the Lie algebra of $\mathcal{O}(d)$ can be interpreted as adjacency matrices of \textit{weighted tournaments}. Then, specific subsamplings of subtournaments lead to efficient updates. To explain this we need the following definitions.

\begin{definition}[weighted tournament]
For $\mathbf{W} \in \mathrm{Sk}(d)$, the weighted tournament $T(\mathbf{W})$ is a directed graph with vertices $\{0,1,...,d-1\}$ and edges $E(\mathbf{W})$, where $(i,j) \in E(\mathbf{W})$ iff $\mathbf{W}[i,j] > 0$. $\mathbf{W}$ is called an adjacency matrix of $T(\mathbf{W})$. The induced undirected graph is denoted $G_{T(\mathbf{W})}$ and its set of edges $E(G_{T(\mathbf{W})})$. A subtournament $S$ of $T(\mathbf{W})$ is obtained by deleting selected edges of $T(\mathbf{W})$ and zeroing corresponding entries in the adjacency matrix $\mathbf{W}$, to form an adjacency matrix of $S$ (denoted $\mathbf{W}[S]$). 
\end{definition}

\vspace{-2mm}
Assume we can rewrite $\Omega \in \mathbb{R}^{d \times d}$ as:
\begin{equation}
\label{approx}
\Omega = \sum_{T \in \mathcal{T}} p_{T} \Omega_{T},
\end{equation}
where: $\mathcal{T}$ is a family of subtournaments of $T(\Omega)$, $\Omega_{T} \in \mathrm{Sk}(d)$ and $\{p_{T}\}_{T \in \mathcal{T}}$ is a discrete probability distribution on $\mathcal{T}$. Then Eq. \ref{approx} leads to the unbiased estimator $\widehat{\Omega}$ of $\Omega$, when $\widehat{\Omega} = \Omega_{T}$ with probability $p_{T}$. Such a decomposition is possible only if each edge in $E(\Omega)$ appears at least in one $T$. We call this a \textit{covering family}.

For a covering family $\mathcal{T}$, the matrices $\Omega_{T}$ can be defined as:
\begin{equation}
    \Omega_{T}=\frac{1}{p_{T}}\mathbf{M}_{\mathcal{T}} \odot \Omega[T] \label{eqn:estimator}
\end{equation} where $\mathbf{M}_{\mathcal{T}}[i,j] >0$ is the inverse of the number of $T \in \mathcal{T}$ such that $\{i,j\} \in E(G_{T})$ and $\odot$ stands for the Hadamard product.
We propose to replace the update from Eq. \ref{integrate} with the following stochastic variant (with probability $p_{T}$):
\begin{equation}
\label{stoch_update_basic}
\mathbf{X}_{i+1} = \Gamma(\eta \Omega_{T}(\mathbf{X}_{i},\mathbf{G}_{i}))\mathbf{X}_{i}    
\end{equation}
We focus on the family of unbiased estimators $\widehat{\Omega}$ given by equations \ref{approx} and \ref{eqn:estimator}. In order to efficiently apply the above stochastic approach for on-manifold optimization, we need to construct a family of subtournaments $\mathcal{T}$ and a corresponding probabilistic distribution $\{p_{T}\}_{T \in \mathcal{T}}$ such that: \textbf{(1)} update of $\mathbf{X}_i$ given $\Omega_{T}(\mathbf{X}_i,\mathbf{G}_i)$ can be computed in sub-cubic time, \textbf{(2)} sampling $\Omega_{T}$ via $\mathcal{P} = \{p_{T}\}$ can be conducted in sub-cubic time, \textbf{(3)} the stochastic update is accurate enough. 

\vspace{-3mm}
\subsection{Multi-connected-components graphs}
\label{sec:factorization}

To achieve \textbf{(1)}, we consider a structured family of subtournaments $\mathcal{T}$. For each $T \in \mathcal{T}$, the corresponding undirected graph $G_{T}$ consists of several connected components. Then the RHS of Eq. \ref{stoch_update_basic} can be factorized leading to an update:
\vspace{-3mm}
\begin{equation}
\label{stoch_update}
\mathbf{X}_{i+1} = \left ( \prod_{k=1,...,l}\Gamma(\eta \Omega_{{T}_{k}}(\mathbf{X}_{i},\mathbf{G}_{i}))  \right ) \mathbf{X}_{i},    
\vspace{-4mm}
\end{equation}

where $l$ stands for the number of connected components and $T_{k}$ is a subtournament of $T$ such that $G_{T_{k}}$ is $G_T$'s $k^{th}$ connected component. We can factorize because matrices $\{\Omega_{T_{k}}\}_{k=1,...,l}$ all commute since they correspond to different connected components.
We will construct the connected components to have the same number of vertices $s = \frac{d}{l}$ (w.l.o.g we can assume that $l ~|~d$) such that the corresponding matrices $\Omega_{T_k}$ have at most $s$ non-zero columns/rows. 

Time complexity of right-multiplying $\Gamma(\eta \Omega_{{T}_{k}}(\mathbf{X}_{i},\mathbf{G}_{i}))$ by a $d \times d$ matrix is $O(ds^{2})$ thus total time complexity of the update from Eq. \ref{stoch_update} is $T_{\mathrm{update}}=O(ds^{2}l)=O(d^{2}s)$.
Thus if $s=o(d)$, we have: $T_{\mathrm{update}}=o(d^{3})$. In fact the update from Eq. \ref{stoch_update} can be further  GPU-parallelized across $l$ threads (since different matrices $\Gamma(\eta \Omega_{{T}_{k}}(\mathbf{X}_{i},\mathbf{G}_{i}))$ modify disjoint subsets of entries of each column of the matrix that $\Gamma(\eta \Omega_{{T}_{k}}(\mathbf{X}_{i},\mathbf{G}_{i}))$ is right-multiplied by) leading to total time complexity $T_{\mathrm{update}}^{\mathrm{parallel}}=O(ds^{2})$ after GPU-parallelization.

Let $\mathcal{T}_{s}$ be the combinatorial space of all subtournaments $T$ of $T(\Omega)$ such that $G_{T}$ has $s$-size connected components. Note that for $2 \leq s < d$, $\mathcal{T}_{s}$ is of size exponential in $d$.

\vspace{-3mm}
\subsection{Sampling subtournaments}
\label{sec:sampling_subtournaments}

Points \textbf{(2)} and \textbf{(3)} are related to each other thus we address them simultaneously. We observe that sampling uniformly at random $\mathcal{T} \sim \mathrm{Unif}(\mathcal{T}_s)$ is trivial: we randomly permute vertices $\{0,1,...,d-1\}$ (this can be done in linear time, see: Python Fisher-Yates shuffle), then take first $s$ to form first connected component, next $s$ to form next one, etc. For the uniform distribution over $\mathcal{T}_{s}$ we have: $\frac{1}{p_T}\mathbf{M}_{\mathcal{T}_s} = \frac{d-1}{s-1}\mathbf{J}_{d}$, where $\textbf{J}_{d} \in \mathbb{R}^{d \times d}$ is all-one matrix (see Lemma \ref{lemma:uniform} in the Appendix), thus we have:
$\Omega_{T} = \frac{d-1}{s-1} G_T$.
We conclude that sampling $\Omega_{T}$ can be done in time
$O(\frac{d}{s}{s \choose 2})=O(ds)$. 

Instead of $\mathcal{T}_{s}$, one can consider a much smaller family $\mathcal{T}$, where graphs $G_{T}$ for different $T \in \mathcal{T}$ consist of disjoint sets of edges (and every edge belongs to one $G_T$). We call such $\mathcal{T}$ \textit{non-intersecting}. It is not hard to see that in that setting $\mathbf{M}_{\mathcal{T}} = \mathbf{J}_{d}$ (since edges are not shared across different graphs $G_T$). Also, the size of $\mathcal{T}$ is at most quadratic in $d$. Thus sampling $\Omega_{T}$ can be conducted in time $O(d^{2})$.

In both cases, $\mathcal{T}=\mathcal{T}_s$ and $\mathcal{T}$ \textit{non-intersecting}, we notice that $\mathbf{M}_\mathcal{T} \propto \mathbf{J_d}$. We call such a $\mathcal{T}$ a \textit{homogeneous family}. 

\subsubsection{Non-uniform distributions}
\vspace{-1mm}

To reduce variance of $\widehat{\Omega}$, uniform distributions should be replaced by  non-uniform ones (detailed analysis of the variance of different methods proposed in this paper is given in the Appendix (Sec. \ref{app:var}, \ref{app:extra_combinatorica})). Intuitively, matrices $\Omega_T$ for which graphs $G_T$ have large absolute edge-weights should be prioritized and given larger probabilities $p_T$. 
We denote by $\|\|_{\mathcal{F}}$ the Frobenius norm and by $w_e$ the weight of edge $e$.

\begin{lemma}[importance sampling]
\label{importance_sampling}
Given $\mathcal{T}$, a distribution $\mathcal{P}_{\mathcal{T}}^{\mathrm{opt}}$ over $\mathcal{T}$ producing unbiased $\widehat{\Omega}$ and minimizing variance $\mathrm{Var}(\widehat{\Omega})=\mathbb{E}
[\|\widehat{\Omega} - \Omega \|^{2}_{\mathcal{F}}]$ satisfies: 
$p^{\mathrm{opt}}_{T} \sim \sqrt{\sum_{(i,j) \in E(G_T)} \left(M_{\mathcal{T}} \odot  \Omega\right)[i,j]^2 }$. For homogeneous families, it simplifies to $p^{\mathrm{opt}}_{T} \sim \sqrt{\sum_{e \in E(G_T)} w_e^2 }$.
\end{lemma}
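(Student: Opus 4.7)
The plan is to run the standard importance-sampling / Cauchy--Schwarz argument on the second moment of $\widehat{\Omega}$. Since $\widehat{\Omega}$ is by construction unbiased, we have the bias-variance identity
\[
\mathrm{Var}(\widehat{\Omega}) \;=\; \mathbb{E}\bigl[\|\widehat{\Omega}\|_{\mathcal{F}}^{2}\bigr] \;-\; \|\Omega\|_{\mathcal{F}}^{2},
\]
so minimizing the variance over admissible distributions $\mathcal{P} = \{p_T\}_{T\in\mathcal{T}}$ is equivalent to minimizing the second moment, since $\|\Omega\|_{\mathcal{F}}^{2}$ does not depend on $\mathcal{P}$.

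Next I would substitute the estimator formula $\Omega_T = \frac{1}{p_T}\,\mathbf{M}_{\mathcal{T}} \odot \Omega[T]$ from Eq.~\ref{eqn:estimator} to get
\[
\mathbb{E}\bigl[\|\widehat{\Omega}\|_{\mathcal{F}}^{2}\bigr] \;=\; \sum_{T \in \mathcal{T}} p_T \,\bigl\|\Omega_T\bigr\|_{\mathcal{F}}^{2} \;=\; \sum_{T \in \mathcal{T}} \frac{a_T^{2}}{p_T}, \qquad a_T \;:=\; \bigl\|\mathbf{M}_{\mathcal{T}} \odot \Omega[T]\bigr\|_{\mathcal{F}}.
\]
Note that the support condition on $\Omega[T]$ means $a_T^{2} = \sum_{(i,j)\in E(G_T)} (\mathbf{M}_{\mathcal{T}} \odot \Omega)[i,j]^{2}$ (up to the symmetry factor which is absorbed into the proportionality constant).

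I would then apply Cauchy--Schwarz to the decomposition $a_T = \sqrt{p_T}\cdot (a_T/\sqrt{p_T})$:
\[
\Bigl(\sum_{T} a_T\Bigr)^{2} \;=\; \Bigl(\sum_{T} \sqrt{p_T}\cdot \tfrac{a_T}{\sqrt{p_T}}\Bigr)^{2} \;\leq\; \Bigl(\sum_{T} p_T\Bigr)\Bigl(\sum_{T} \tfrac{a_T^{2}}{p_T}\Bigr) \;=\; \sum_{T} \tfrac{a_T^{2}}{p_T},
\]
with equality exactly when $a_T/\sqrt{p_T}$ is proportional to $\sqrt{p_T}$, i.e.\ $p_T \propto a_T$. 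Combined with the normalization $\sum_T p_T = 1$, this gives $p_T^{\mathrm{opt}} = a_T / \sum_{T'} a_{T'}$, which is precisely the claim.

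For the homogeneous case I would simply observe that $\mathbf{M}_{\mathcal{T}} \propto \mathbf{J}_d$ makes $\mathbf{M}_{\mathcal{T}} \odot \Omega[T]$ a scalar multiple of $\Omega[T]$, so that $a_T$ reduces (up to a global constant that cancels in the normalization of $p_T^{\mathrm{opt}}$) to $\sqrt{\sum_{e \in E(G_T)} w_e^{2}}$ by skew-symmetry of $\Omega$. There is no serious obstacle here; the only minor care needed is the observation that $\widehat{\Omega}$ is always unbiased for any $\{p_T\}$ with $p_T > 0$ on a covering family, so unbiasedness is not an additional constraint but a built-in property of the estimator class, and the optimization is genuinely unconstrained apart from $\sum_T p_T = 1$ and $p_T > 0$.
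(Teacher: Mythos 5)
Your argument is correct and reaches exactly the paper's conclusion, but via a different (and arguably cleaner) route. The paper minimizes $f(\mathcal{P})=\sum_T a_T^2/p_T$ over the probability simplex using a Lagrangian: it introduces a multiplier $\lambda$ for the constraint $\sum_T p_T = 1$, sets $\partial L/\partial p_T = 0$, and solves to get $p_T \propto a_T$, relying implicitly on convexity of $f$ on the open simplex to guarantee the stationary point is the global minimum. You instead obtain the result in one step via Cauchy--Schwarz: $\bigl(\sum_T a_T\bigr)^2 \le \bigl(\sum_T p_T\bigr)\bigl(\sum_T a_T^2/p_T\bigr) = \sum_T a_T^2/p_T$, with equality iff $p_T\propto a_T$. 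This buys you a fully elementary proof that simultaneously exhibits the optimal value $\bigl(\sum_T a_T\bigr)^2 - \|\Omega\|_{\mathcal F}^2$ and the optimizer, without appealing to KKT conditions or convexity of the objective. Your handling of the homogeneous case (the constant from $\mathbf{M}_\mathcal{T}\propto\mathbf{J}_d$ cancelling in the normalization) matches the paper's remark exactly. One small stylistic caution: the factor of $2$ from the double count over $(i,j)$ and $(j,i)$ in $\|\cdot\|_{\mathcal F}^2$ that the paper carries explicitly is harmless in your argument since you absorb it into the proportionality, as you note, but it is worth stating once so a reader does not worry about the mismatch between $\|\mathbf{M}_\mathcal{T}\odot\Omega[T]\|_{\mathcal F}^2$ (a sum over all ordered pairs) and $\sum_{(i,j)\in E(G_T)}(\mathbf{M}_\mathcal{T}\odot\Omega)[i,j]^2$ (a sum over unordered edges).
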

\vspace{-1mm}
From now on, we consider homogeneous families.
Sampling from optimal $\mathcal{P}$ is straightforward if $\mathcal{T}$ is non-intersecting and can be conducted in time $O(d^{2})$, but becomes problematic for families $\mathcal{T}$ of exponential sizes.
We now introduce a rich family of distributions for which sampling can be conducted efficiently for homogeneous $\mathcal{T}$.

\begin{definition}[$h$-regular distributions]
\label{def:hregular}
For even function $h:\mathbb{R} \rightarrow \mathbb{R}_{+}$ such that $h(0)=0$,
we say that distribution $\mathcal{P}^{h}(\mathcal{T})$ over $\mathcal{T}$ is $h$-regular if 
$p^{h}_{T} \sim \sum_{e \in E(G_T)} h(w_e)$.
\end{definition}

The core idea is to sample uniformly at random, which we can do for $\mathcal{T}_s$, but then accept sampled $T$ with certain easily-computable probability $q_T^h$. If $T$ is not accepted, the procedure is repeated. Probability $q_T^h$ should satisfy $q_T^h = \lambda p^{h}_{T}$, for a renormalization term $\lambda>0$.

\vspace{-2mm}  
\begin{algorithm}[H]
\caption{Constructing tournament $T \sim \mathcal{P}^{h}(\mathcal{T})$ and $\Omega_T$}
\textbf{Hyperparameters:} $0 < \alpha, \beta < 1$ (only in version II)\; \\
\textbf{Input:}  $\Omega \in \mathrm{Sk}(d)$\; \\
\textbf{Output:} subtournament $T$ and corresponding $\Omega_T$ \; \\
\textbf{Preprocessing:} Compute $\tau=\rho \|h(\Omega)\|_{1}$,
where $\rho=1$ (version I) or $\rho=\frac{s}{d \alpha \beta}$ (version II). \; \\
\While{$\mathrm{True}$}{
  1. sample $T \sim \mathrm{Unif}(\mathcal{T}_s)$ (see: Sec. \ref{sec:sampling_subtournaments}),\; \\
  2. compute $q_T^h= \frac{2h(G_T)}{\tau}$
  ($h(G_T)$ as in Lemma \ref{probability_formula}), \; \\
  3. with probability $q_T^h$ return $\left(T,\frac{\|h(\Omega)\|_1}{2h(G_T)}G_T\right)$.
}
\label{Alg:asebo}
\end{algorithm}
\vspace{-6mm}

The larger $\lambda$, the smaller the expected number of trials needed to accept sampled $T$. Indeed, the following is true:
\begin{lemma}
\label{expected_number_of_trials}
The expected number of trials before sampled $T$ is accepted is: $\frac{|\mathcal{T}_{s}|}{\lambda}$, where $|\mathcal{X}|$ stands for the size of $\mathcal{X}$. 
\end{lemma}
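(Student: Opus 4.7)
The plan is to recognize that the acceptance loop is a textbook rejection sampler, so the number of trials is geometric and its expectation is the reciprocal of the per-trial acceptance probability. First I would compute the probability that a single iteration of the while loop succeeds. By the total probability formula, conditioning on which $T$ was drawn uniformly from $\mathcal{T}_s$,
\begin{equation*}
\Pr[\text{accept on a given trial}] = \sum_{T \in \mathcal{T}_s} \frac{1}{|\mathcal{T}_s|} \, q_T^h.
\end{equation*}
Using the prescribed form $q_T^h = \lambda\, p_T^h$ and the fact that $\{p_T^h\}_{T \in \mathcal{T}_s}$ is a probability distribution on $\mathcal{T}_s$ (so it sums to $1$), this collapses to $\lambda/|\mathcal{T}_s|$.

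Next I would observe that the trials are i.i.d.\ (each pass through the while loop samples independently and checks independent Bernoulli acceptance), so the number $N$ of trials until the first acceptance is geometric with success probability $\lambda/|\mathcal{T}_s|$. Hence $\mathbb{E}[N] = |\mathcal{T}_s|/\lambda$, which is the claim.

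The only thing to verify carefully is that $\lambda p_T^h \le 1$ for every $T \in \mathcal{T}_s$, so that the acceptance step in the algorithm is a valid probability; otherwise the interpretation of $q_T^h$ as a probability would break. This is the main (small) obstacle and is handled by the choice of the renormalization constant $\tau$ in the preprocessing step: since $q_T^h = 2h(G_T)/\tau$ and $2h(G_T) = \sum_{e \in E(G_T)} 2h(w_e) \le \|h(\Omega)\|_1 \le \tau$ (with $\tau = \rho\|h(\Omega)\|_1$, $\rho \ge 1$ in version I, and the analogous bound holding in version II by the definition of $\rho$), we indeed have $q_T^h \in [0,1]$, and the derivation above goes through unchanged.
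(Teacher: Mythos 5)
Your proof is correct and takes essentially the same approach as the paper: compute the per-trial acceptance probability $\lambda/|\mathcal{T}_s|$ by conditioning on the sampled $T$ and using that $\{p_T^h\}$ is a probability distribution, then invert to get the expected number of trials. One small bonus: you correctly identify the trial-count distribution as geometric, whereas the paper's proof mislabels it as a Poisson distribution (though it still arrives at the correct expectation).
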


On the other hand, we must have: $\lambda \leq \frac{1}{\max _T p _{T}^{h}}$. The following result enables us to choose large enough $\lambda > 0$, so that the expected number of trials remains small.

\begin{lemma}
\label{probability_formula}
For an $h$-regular distribution on $\mathcal{T}_s$,  probability $p_{T}^{h}$ is given as: $ p^{h}_{T}=\frac{2h(G_T)}{W \|h(\Omega)\|_{1}} \leq \frac{1}{W}$, where $h(G_T) = \sum_{e \in E(G_T)} h(w_e)$, $h(\Omega)=[h(\Omega[i,j])]_{i,j \in \{0,1,...,d-1\}}$, 
$\|h(\Omega)\|_{1}=\sum_{i,j}h(\Omega[i,j])$
and
$W=(d-2)!/\left((s-2)!(s!)^{\frac{d-s}{s}}(\frac{d-s}{s})!\right)$.
\end{lemma}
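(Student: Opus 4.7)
The plan is to compute the normalizing constant of the $h$-regular distribution by a double-counting argument on edges. Since $p^{h}_{T} \propto h(G_{T}) := \sum_{e \in E(G_{T})} h(w_{e})$, the probability is
$$
p^{h}_{T} = \frac{h(G_{T})}{Z}, \qquad Z = \sum_{T \in \mathcal{T}_{s}} h(G_{T}),
$$
so the whole task reduces to evaluating $Z$ and showing $Z = \tfrac{1}{2} W \|h(\Omega)\|_{1}$.

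First I would swap the order of summation:
$$
Z = \sum_{T \in \mathcal{T}_{s}} \sum_{e \in E(G_{T})} h(w_{e}) = \sum_{e \in E(K_{d})} h(w_{e}) \cdot N(e),
$$
where $N(e)$ counts how many elements $T \in \mathcal{T}_{s}$ contain the undirected edge $e$. The key observation is that any $T \in \mathcal{T}_{s}$ is equivalent to a partition of $\{0,1,\dots,d-1\}$ into $d/s$ blocks of size $s$ (the connected components of $G_{T}$), and $G_{T}$ contains the edge $\{i,j\}$ iff $i$ and $j$ lie in the same block. By vertex-transitivity of this setup, $N(e)$ is the same integer $N$ for every edge $e$ of $K_{d}$.

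To compute $N$, fix an edge $\{i,j\}$: the block containing $i,j$ is determined by choosing the remaining $s-2$ vertices out of the other $d-2$, in $\binom{d-2}{s-2}$ ways, and the remaining $d-s$ vertices must be split into $(d-s)/s$ unordered blocks of size $s$, which can be done in $\frac{(d-s)!}{(s!)^{(d-s)/s}((d-s)/s)!}$ ways. Multiplying and simplifying gives
$$
N = \frac{(d-2)!}{(s-2)!\,(s!)^{(d-s)/s}\,((d-s)/s)!} = W.
$$
Since $h$ is even and $\Omega$ is skew-symmetric, $h(\Omega[i,j]) = h(\Omega[j,i])$, so $\sum_{e \in E(K_{d})} h(w_{e}) = \tfrac{1}{2} \|h(\Omega)\|_{1}$. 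Combining yields $Z = \tfrac{1}{2} W \|h(\Omega)\|_{1}$, which gives the claimed formula
$$
p^{h}_{T} = \frac{2 h(G_{T})}{W \|h(\Omega)\|_{1}}.
$$

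Finally, the bound $p^{h}_{T} \leq 1/W$ is immediate: $G_{T}$ is a subgraph of $K_{d}$, so $h(G_{T}) \leq \sum_{e \in E(K_{d})} h(w_{e}) = \tfrac{1}{2}\|h(\Omega)\|_{1}$. The only place that requires any thought is checking that the combinatorial count $W$ is correct and that symmetry really gives the same $N(e)$ for every edge; the rest is bookkeeping.
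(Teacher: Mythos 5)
Your proof is correct and follows essentially the same approach as the paper: swap the order of summation over $T$ and over edges, use the fact that each edge of $K_d$ lies in the same number $W$ of tournaments, and identify $\sum_{e} h(w_e) = \tfrac{1}{2}\|h(\Omega)\|_1$ by evenness of $h$ and skew-symmetry of $\Omega$. The only minor variation is that you compute $W$ directly (choose the $s-2$ remaining block-mates, then partition the other $d-s$ vertices into blocks of size $s$), whereas the paper derives $W$ indirectly via the handshake-style identity $|\mathcal{T}_s|\cdot\frac{d}{s}\binom{s}{2} = W\binom{d}{2}$ in its Lemma~\ref{lemma:uniform}; both give the same closed form, and you also make the trailing bound $p^h_T \le 1/W$ explicit, which the paper leaves implicit.
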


Lemma \ref{probability_formula} enables us to choose $\lambda=W$ and consequently: $q^h_T = \frac{2h(G_T)}{\|h(\Omega)\|_{1}}$. 
We can try to do even better, by taking: 
$\lambda = \frac{W\|h(\Omega)\|_{1}}{\tau}$
for any $\tau$ such that $\tau^{*} = 2\max_{T \in \mathcal{T}_{s}} h(G_T)  \leq \tau < \|h(\Omega)\|_{1}$, leading to: 
$q^h_T = \frac{2h(G_T)}{\tau}$. 

Efficiently finding a nontrivial (i.e. smaller than $\|h(\Omega)\|_{1}$) upper bounds $\tau$ on $\tau^{*}$ 
is not always possible, but can be trivially done for $(\alpha,\beta, h)$-balanced matrices $\Omega$, i.e. $\Omega$ such that at least an $\alpha$-fraction of all entries $\Omega[i,j]$ of $\Omega$
satisfy: $h(\Omega[i,j]) \geq \beta \max_{a,b} h(\Omega[a,b])$.
It is not hard to see that for such $\Omega$ one can take:
$\tau = O(\frac{s}{d \alpha \beta}\|h(\Omega)\|_{1})$ or $\tau=O(\frac{s}{d}\|h(\Omega)\|_{1})$ for $\alpha^{-1},\beta^{-1}=O(1)$. 
We observed (see: Section \ref{app:ablation}) that in practice one can often take $\tau$ of that order.
Our general method for sampling $T \in \mathcal{T}_s$ and the corresponding $\Omega_T$ is given in $\mathrm{Alg.}$ 1.
We have:
\begin{theorem}
\label{iter-time-complexity}
Alg. 1 outputs $T \in \mathcal{T}_s$ from distribution $\mathcal{P}^{h}(\mathcal{T}_s)$ and corresponding $\Omega_T$. Its expected 
time complexity is $O(\frac{d^2 \gamma \tau}{\|h(\Omega)\|_{1}})+\xi$, where $\gamma$ is time complexity for computing a fixed entry of $\Omega$ and $\xi$ - for computing $\|h(\Omega)\|_1$.
\end{theorem}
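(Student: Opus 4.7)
The plan is to split the proof into a correctness part (output distribution and returned matrix) and a running-time part.

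For the distribution, I would observe that Algorithm 1 is a rejection sampler with proposal $\mathrm{Unif}(\mathcal{T}_s)$ and per-proposal acceptance probability $q_T^h = 2h(G_T)/\tau$. Standard rejection-sampling identities then give $\Pr[\text{output}=T] \propto q_T^h \propto h(G_T)$, which is $\mathcal{P}^h(\mathcal{T}_s)$ by Definition \ref{def:hregular}. To justify that this is a valid sampler I would verify $q_T^h \leq 1$: since $h$ is even and nonnegative and $\Omega$ is skew-symmetric, $\|h(\Omega)\|_1 = 2\sum_{i<j} h(\Omega[i,j])$, while $h(G_T) \leq \sum_{i<j} h(\Omega[i,j]) = \tfrac{1}{2}\|h(\Omega)\|_1$, so $2h(G_T) \leq \|h(\Omega)\|_1$, which equals $\tau$ in version I; in version II the analogous inequality $2h(G_T) \leq \tau$ is exactly what the $(\alpha,\beta,h)$-balanced hypothesis is engineered to give.

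For correctness of the returned matrix, I would verify that $\tfrac{\|h(\Omega)\|_1}{2h(G_T)}\,\Omega[T]$ coincides with the unbiased estimator $\tfrac{1}{p_T^h}\,M_{\mathcal{T}_s}\odot\Omega[T]$ of Eq. \ref{eqn:estimator} (reading the $G_T$ returned by the algorithm as the restricted matrix $\Omega[T]$). By the symmetry of $\mathcal{T}_s$, every off-diagonal entry of $M_{\mathcal{T}_s}$ equals $1/W$ with $W$ as in Lemma \ref{probability_formula}; substituting $p_T^h = 2h(G_T)/(W\|h(\Omega)\|_1)$ cancels the $W$ and recovers exactly the formula returned by the algorithm.

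For the running time, preprocessing costs $\xi$ to evaluate $\|h(\Omega)\|_1$ and $O(1)$ to form $\tau$. Each trial in the main loop consists of sampling $T \sim \mathrm{Unif}(\mathcal{T}_s)$ by a Fisher--Yates shuffle at $O(d)$ cost and computing $h(G_T) = \sum_{e \in E(G_T)} h(w_e)$, which loops over the $(d/s)\binom{s}{2} = O(ds)$ edges of $G_T$ and pays $\gamma$ per query to $\Omega$ (treating the scalar evaluation of $h$ as $O(1)$), for a per-trial cost of $O(ds\gamma)$. By Lemma \ref{expected_number_of_trials}, the expected number of trials equals $|\mathcal{T}_s|/\lambda$, where $\lambda$ is defined by $q_T^h = \lambda\, p_T^h$; combining Lemma \ref{probability_formula} with the definition of $q_T^h$ gives $\lambda = W\|h(\Omega)\|_1/\tau$, and a direct computation from the explicit factorials in $W$ yields the identity $|\mathcal{T}_s|/W = (d-1)/(s-1)$, producing an expected trial count of $O\bigl(\tfrac{d\tau}{s\|h(\Omega)\|_1}\bigr)$. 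Multiplying by the per-trial cost $O(ds\gamma)$ and adding the preprocessing cost gives the claimed bound $\xi + O\bigl(\tfrac{d^2\gamma\tau}{\|h(\Omega)\|_1}\bigr)$. The only nontrivial step is arranging the homogeneity constant $1/W$ to cancel consistently between the estimator formula and the expected-trial bound so that the final complexity depends only on the ratio $\tau/\|h(\Omega)\|_1$; everything else is direct bookkeeping.
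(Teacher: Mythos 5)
Your proposal is correct and follows the same strategy as the paper: establish that the loop is a valid rejection sampler for $\mathcal{P}^h(\mathcal{T}_s)$ and then combine Lemma~\ref{expected_number_of_trials} (expected trial count $|\mathcal{T}_s|/\lambda$) with the per-trial cost. You actually supply more detail than the paper, which proves the output distribution via an explicit geometric-series calculation and then dismisses the time bound as ``a direct consequence of Lemma~\ref{expected_number_of_trials}''; your explicit verification that $q_T^h\le 1$, that the returned matrix equals $\frac{1}{p_T^h}M_{\mathcal{T}_s}\odot\Omega[T]$, and the arithmetic $|\mathcal{T}_s|/W=(d-1)/(s-1)$ leading to $O\bigl(\frac{d\tau}{s\|h(\Omega)\|_1}\bigr)$ expected trials times $O(ds\gamma)$ per trial are all correct and fill in steps the paper leaves implicit.
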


\vspace{-4mm}
\subsubsection{Extensions}
\paragraph{Dynamic domains $\mathcal{T}$:} One can consider changing sampling domains $\mathcal{T}$ across iterations of the optimization algorithm. If non-intersecting families are used (see: Sec. \ref{sec:sampling_subtournaments}), $\mathcal{T}$ can be chosen at each step (or periodically) to minimize the optimal variance given by Lemma \ref{importance_sampling}. Optimizing $\mathcal{T}$ is almost always a nontrivial combinatorial problem.
We shed light on these additional intrinsic connections between combinatorics and on-manifold optimization in Sec. \ref{app:extra_combinatorica}.

\vspace{-5mm}
\paragraph{Partition functions:} Our sampling scheme can be modified to apply to distributions from
Lemma \ref{importance_sampling}. The difficulty lies in obtaining a renormalization factor for probabilities $p^{h}_{T} \sim \sqrt{\sum_{e \in E(G_T)} w_{e}^{2}}$ which is a nontrivial graph partition function. However it can  often be approximated by Monte Carlo methods \cite{jain}. In practice we do not need to use variance-optimal optimizers to obtain good results. 

\vspace{-3mm}
\subsection{Time Complexity of the stochastic algorithm}
\label{time_complexity}

To summarize, we can conduct single optimization step given a sampled $T$ and $\Omega_T$ in time $O(d^{2}s)$ or even $O(ds^{2})$ after further GPU-parallelization.
If $\Omega$ is given then, due to Theorem \ref{iter-time-complexity} and by previous analysis, entire sampling procedure can be done in $O(d^{2})$ time leading to sub-cubic complexity of the entire optimization step. If $\Omega$ is not given, then notice first that computing a fixed entry of $\Omega$ takes time $O(k)$ for $\mathcal{ST}(d,k)$ and $O(d)$ for $\mathcal{O}(d)$ or $\mathcal{SO}(d)$. If sampling is conducted uniformly, then it invokes getting ${s \choose 2}\frac{d}{s}=O(ds)$ edges of the graph (i.e. entries of $\Omega$) and thus total complexity is still sub-cubic.
Furthermore, for $\mathcal{ST}(d,k)$ and $k=o(d)$ that remains true even for non-uniform sampling due to Theorem \ref{iter-time-complexity} since Alg. 1 runs in $O(d^{2}k)$ as opposed to $O(d^{3})$ time. 

Finally, for $\mathcal{O}(d)$, $\mathcal{SO}(d)$ and when $d=O(k)$, by Theorem \ref{iter-time-complexity}, non-uniform sampling can be conducted in time 
$O(\frac{d^{2}s}{\alpha \beta}) + \xi$ for $(\alpha, \beta, h)$-balanced $\Omega$. The first term is sub-cubic when $\alpha^{-1}\beta^{-1} = o(\frac{d}{s})$ as we confirm in practical applications (see: Sec. \ref{app:ablation} and our discussion above).
In practice for such $\Omega$, the value $\|h(\Omega)\|_{1}$ can be accurately estimated for arbitrarily small relative error $\epsilon>0$ with probability $p=1-O(\mathrm{exp}(-(\frac{\epsilon \alpha \beta r}{3d})^{2}))$ by simple Monte Carlo procedure that approximates $h(\Omega)$ with its sub-sampled version of only $O(r)$ nonzero entries (see: Sec. \ref{app:mc}). In practice one can choose $r=o(d^{2})$ resulting in $\xi=o(d^{3})$ and sub-cubic complexity of the entire step.

\vspace{-3mm}
\section{Optimizing with Graph Matchings} \label{matchings}
Note that if $\mathcal{T}$ is chosen in such a way that every connected component of $G_T$ consists of two vertices then $G_T$ is simply a \textit{matching} \cite{diestel}, i.e. a graph, where every vertex is connected to at most one more vertex. In particular, if $s=2$, then $\{G_T:T \in \mathcal{T}_s\}$ is a collection of all \textit{perfect matchings} of $G_{T(\Omega)}$, i.e. matchings, where every vertex belongs to some edge. For such $\mathcal{T}$s, the update rule given by Eq. \ref{stoch_update} has particularly elegant form, namely:
\vspace{-1mm}
\begin{equation}
\mathbf{X}_{i+1}=\prod_{k=1,...,l}\mathbf{G}^{\theta_k}_{i_k,j_k} \mathbf{X}_{i},  
\vspace{-2mm}
\end{equation}
for some $\theta_{1},...,\theta_{l} \in [0, 2\pi]$ and
$i_1,j_1,...,i_k,j_k$.
In other words, the update is encoded by the product of Givens rotations (see: Section \ref{mappings}).
This sheds new light on recently proposed algorithms using products of Givens rotations to learn neural networks for RL \cite{kamas} or to approximate Haar measure on $\mathcal{O}(d)$, (see: Sec. \ref{app:givens}).

Constructing a good non-intersecting family $\mathcal{T}$ in this setting is intrinsically related to graph matching/edge coloring optimization problems since $\mathcal{T}$ can be obtained by iteratively finding heavy matchings (i.e. monochromatic classes of valid edge colorings) of $G_T$.
In Sec. \ref{app:extra_combinatorica} we explain these connections in more detail. Even though not central for our main argument, they provide additional deeper context.

\vspace{-3mm}
\section{Convergence Results}
\label{sec:convergence}

We show that our stochastic optimizers have similar convergence rates as deterministic ones. The difference is quantified by the term $\sigma$ related to the variance of the estimator of $\Omega$ (see below). Without loss of generality we consider optimization on $\mathcal{O}(d)$. Analogous results hold for $\mathcal{ST}(d,k)$.

\begin{theorem} 
\label{osgd}
Let $F: \mathbb{R}^{d \times d} \to \mathbb{R}$ be such that  standard gradient $\nabla F$ is defined on $\mathcal{O}(d)$ and for all $ \mathbf{M}, \mathbf{N} \in \mathcal{O}(d),$
\begin{equation} \label{lipschitz}
     \| \nabla F (\mathbf{M}) - \nabla F (\mathbf{N}) \|_\mathcal{F} \leq L \| \mathbf{M} - \mathbf{N} \|_\mathcal{F},
\end{equation}
where $L>0$ \footnote{As noted by \citet{shalit}, because $\mathcal{O}(d)$ is compact, any $F$ with a continuous second derivative will obey (\ref{lipschitz}).} and $\| \cdot \|_\mathcal{F}$ is a Frobenius norm.  Let $\{ \mathbf{X}_i \}_{i \geq 0}$ be the sequence generated by the proposed stochastic update. $\mathbf{X}_{0} \in \mathcal{O}(d)$ is fixed and $\mathbf{X}_{i + 1} := \exp (\eta_i \widehat{\Omega}_i) \mathbf{X}_i$, where $\widehat{\Omega}_i$ is drawn from distribution $\mathbb{P} (\widehat{\Omega}_i)$  defined on $\mathrm{Sk}(d)$  s.t. $\mathbb{E} \widehat{\Omega}_i = \Omega_i$ and $\Omega_i := \nabla F (\mathbf{X}_i) \mathbf{X}_i^\top - \mathbf{X}_i \nabla F (\mathbf{X}_i)^\top$.
Here $\{ \eta_i > 0 \}_{i \geq 0}$ is a sequence of step sizes. Then
\vspace{-3mm}
\begin{align*}
    \min_{i = \overline{0..T}} & \mathbb{E} \| \nabla_\mathcal{O} F (\mathbf{X}_i) \|^2_\mathcal{F} \leq 2 \frac{F^* - F(\mathbf{X}_0)}{\sum_{i = 0}^T \eta_i} + \Sigma_T 
\end{align*}
where $\Sigma_T=\sigma^2 \biggl( (2 \sqrt{d} + 1) L + \| \nabla F (\mathbf{I}_{d}) \|_\mathcal{F} \biggr) \frac{\sum_{i = 0}^T \eta^2_i}{\sum_{i = 0}^T \eta_i}$,
$\nabla_{\mathcal{O}}F$ denotes a Riemannian gradient (see: Sec. \ref{riemann}), $F^* = \sup_{\mathbf{X} \in \mathcal{O}(d)} F(\mathbf{X})$ and $\sigma^2 > 0$ is chosen so that $\forall \Omega \in \{ \nabla_\mathcal{O} F (\mathbf{X}) \mathbf{X}^\top | \mathbf{X} \in \mathcal{O}(d) \} : \sigma^2 \geq \mathbb{E} \| \widehat{\Omega} \|^2_\mathcal{F}$.
\end{theorem}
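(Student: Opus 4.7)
The plan is to adapt the standard non-convex SGD analysis to the manifold setting, exploiting that multiplying by $\exp(t\widehat{\Omega})$ preserves Frobenius norms. The global estimate will come from a one-step descent-type inequality, summation, and finally bounding the minimum by the average.

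First I would relate the Euclidean inner product with the natural one: for $\mathbf{X}\in\mathcal{O}(d)$, $\mathbf{G} = \nabla F(\mathbf{X})$, and $\Omega = \mathbf{G}\mathbf{X}^\top - \mathbf{X}\mathbf{G}^\top$, a short trace computation using cyclicity and $\mathbf{X}^\top\mathbf{X} = \mathbf{I}_d$ yields
\[
\langle \nabla F(\mathbf{X}), \Omega\mathbf{X}\rangle_{\mathcal{F}} = \tfrac{1}{2}\|\Omega\|_\mathcal{F}^2 = \tfrac{1}{2}\|\nabla_\mathcal{O} F(\mathbf{X})\|_\mathcal{F}^2,
\]
since $\|\Omega\mathbf{X}\|_\mathcal{F} = \|\Omega\|_\mathcal{F}$ on $\mathcal{O}(d)$. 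This is the key identity that turns a raw gradient term into the Riemannian gradient norm once we take expectation over the unbiased $\widehat{\Omega}_i$.

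Next I would analyze a single step along the smooth curve $\gamma(t) = \exp(t\widehat{\Omega}_i)\mathbf{X}_i$. The fundamental theorem of calculus gives $F(\mathbf{X}_{i+1})-F(\mathbf{X}_i) = \int_0^{\eta_i}\langle \nabla F(\gamma(t)),\widehat{\Omega}_i\gamma(t)\rangle\,dt$, and I would split the integrand as its value at $t=0$ plus a remainder. The remainder is controlled by adding and subtracting $\langle \nabla F(\mathbf{X}_i),\widehat{\Omega}_i\gamma(t)\rangle$, then using (i) the hypothesis (\ref{lipschitz}), (ii) the identity $\|\gamma(t)-\gamma(0)\|_\mathcal{F} \le t\|\widehat{\Omega}_i\|_\mathcal{F}$ (which follows from the spectral representation of $\exp(t\widehat{\Omega}_i)$ for skew-symmetric matrices, using $|e^{\mathrm i s}-1|\le |s|$), and (iii) the global bound $\|\nabla F(\mathbf{X}_i)\|_\mathcal{F} \le 2\sqrt{d}L + \|\nabla F(\mathbf{I}_d)\|_\mathcal{F}$ (Lipschitzness plus $\|\mathbf{X}_i - \mathbf{I}_d\|_\mathcal{F}\le 2\sqrt d$). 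Integrating $t$ from $0$ to $\eta_i$ pulls out an $\eta_i^2/2$ factor, yielding
\[
F(\mathbf{X}_{i+1}) \ge F(\mathbf{X}_i) + \eta_i\langle \nabla F(\mathbf{X}_i),\widehat{\Omega}_i\mathbf{X}_i\rangle - \tfrac{\eta_i^2}{2}\bigl((2\sqrt d+1)L + \|\nabla F(\mathbf{I}_d)\|_\mathcal{F}\bigr)\|\widehat{\Omega}_i\|_\mathcal{F}^2.
\]

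Finally I would take the expectation conditional on $\mathbf{X}_i$, apply unbiasedness $\mathbb{E}\widehat{\Omega}_i = \Omega_i$ together with the identity above to get $\frac{\eta_i}{2}\mathbb{E}\|\nabla_\mathcal{O} F(\mathbf{X}_i)\|_\mathcal{F}^2$ from the first-order term, and use $\mathbb{E}\|\widehat{\Omega}_i\|_\mathcal{F}^2 \le \sigma^2$ on the remainder. Summing the resulting inequality over $i=0,\dots,T$, the left-hand side telescopes into $\mathbb{E}F(\mathbf{X}_{T+1})-F(\mathbf{X}_0) \le F^*-F(\mathbf{X}_0)$, and rearranging gives $\tfrac{1}{2}\sum_i \eta_i\,\mathbb{E}\|\nabla_\mathcal{O} F(\mathbf{X}_i)\|_\mathcal{F}^2 \le (F^*-F(\mathbf{X}_0)) + \tfrac{\sigma^2}{2}\bigl((2\sqrt d+1)L+\|\nabla F(\mathbf{I}_d)\|_\mathcal{F}\bigr)\sum_i\eta_i^2$. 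Dividing by $\tfrac{1}{2}\sum_i\eta_i$ and using $\min_i \mathbb{E}\|\nabla_\mathcal{O} F(\mathbf{X}_i)\|_\mathcal{F}^2 \le \frac{\sum_i\eta_i\,\mathbb{E}\|\nabla_\mathcal{O} F(\mathbf{X}_i)\|_\mathcal{F}^2}{\sum_i\eta_i}$ delivers exactly the claimed bound.

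The main technical obstacle is the curvature/nonlinearity of $\exp$: a naive descent lemma in the ambient Euclidean geometry would only give a $\frac{L}{2}\|\mathbf{X}_{i+1}-\mathbf{X}_i\|_\mathcal{F}^2$ remainder, which would not capture the $\|\nabla F(\mathbf{I}_d)\|_\mathcal{F}$ contribution correctly. Working along the curve $\gamma(t)$ and splitting the remainder into a gradient-change part (bounded by $L$) and a displacement part (bounded by $\|\nabla F(\mathbf{X}_i)\|_\mathcal{F}$) is what produces the correct coefficient $(2\sqrt d + 1)L + \|\nabla F(\mathbf{I}_d)\|_\mathcal{F}$; keeping track of the Frobenius-vs-operator norm estimates through the Taylor expansion of $\exp(t\widehat{\Omega}_i)$ is the only delicate point.
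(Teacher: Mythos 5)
Your proposal is correct and follows essentially the same route as the paper's proof: a one-step descent inequality along the curve $\gamma(t)=\exp(t\widehat{\Omega}_i)\mathbf{X}_i$, obtained by integrating $g'(t)-g'(0)$ and bounding it via the Lipschitz hypothesis, orthogonal invariance of the Frobenius norm, the spectral block decomposition of the skew-symmetric matrix (giving $\|\exp(t\widehat{\Omega}_i)-\mathbf{I}_d\|_\mathcal{F}\leq |t|\,\|\widehat{\Omega}_i\|_\mathcal{F}$), and the bound $\|\nabla F(\mathbf{X}_i)\|_\mathcal{F}\leq 2L\sqrt d + \|\nabla F(\mathbf{I}_d)\|_\mathcal{F}$, followed by the identity $\langle\nabla F,\Omega\mathbf{X}\rangle_{\mathrm{e}}=\tfrac12\|\Omega\|_\mathcal{F}^2$, expectation, telescoping, and a min-vs-average bound. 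The only difference from the paper is cosmetic (you state the inner-product identity upfront, the paper derives it after the descent inequality), so the approaches are the same.
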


\section{Experiments}
\label{sec:experiments}

\subsection{RL with Evolution Strategies and RNNs}
\label{sec:rl}
Here we demonstrate the effectiveness of our approach in optimizing policies for a variety of continuous RL tasks from the $\mathrm{OpenAI}$ $\mathrm{Gym}$ ($\mathrm{Humanoid}$, $\mathrm{Walker2d}$, $\mathrm{HalfCheetah}$) and $\mathrm{DM}$ $\mathrm{Control}$ $\mathrm{Suite}$ ($\mathrm{Reacher:Hard}$, $\mathrm{HopperStand}$ and $\mathrm{Swimmer:15}$). 

\vspace{-4mm}
\begin{figure}[H]
    \begin{minipage}{0.49\textwidth}
    \subfigure{\includegraphics[width=.99\linewidth]{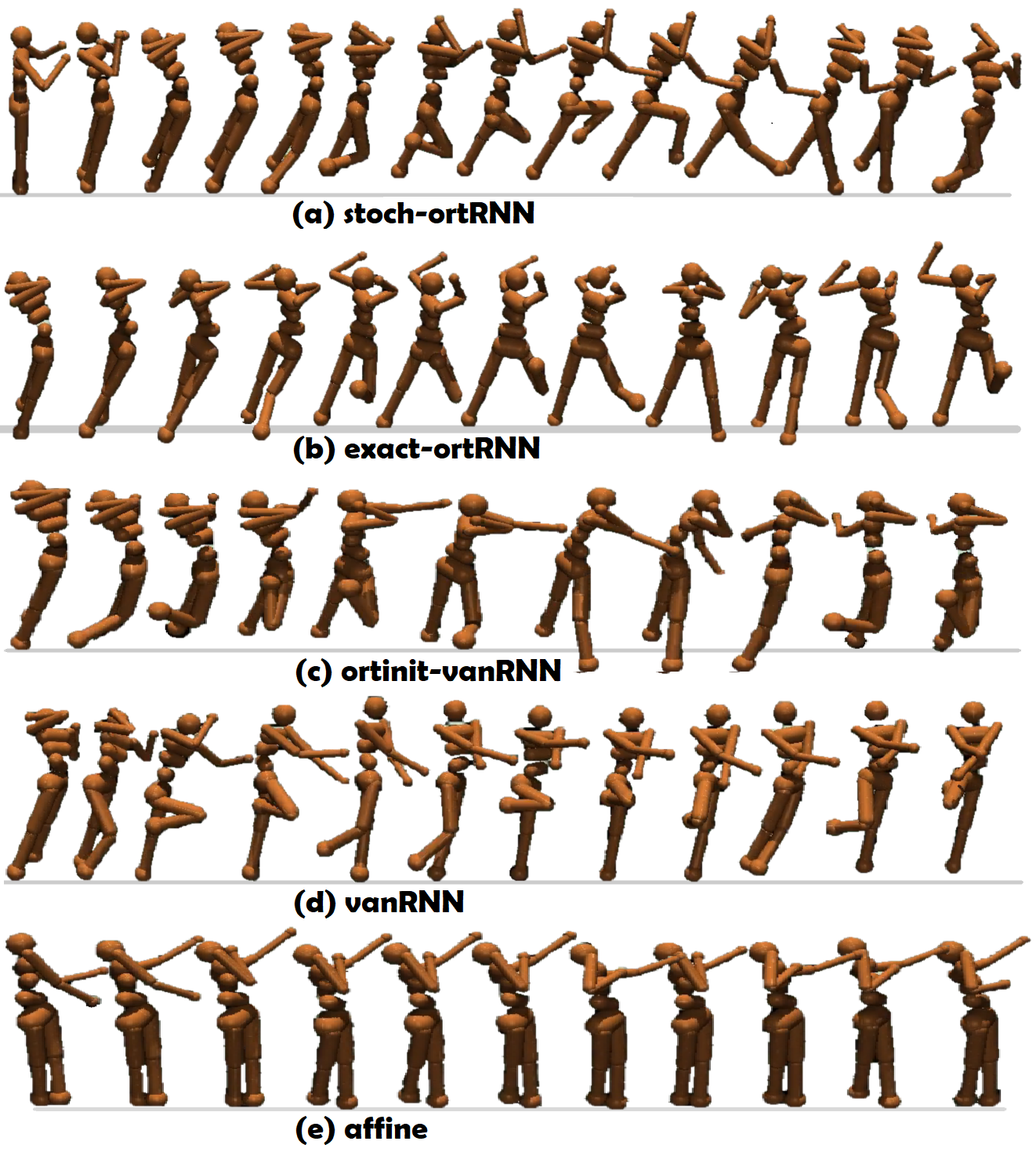}}
    \vspace{-3mm}
    \end{minipage}
    \caption{\small{Visualizations of policies learned by different algorithms for $\mathrm{Humanoid}$ from $\mathrm{OpenAI}$ $\mathrm{Gym}$.}}
    \label{figure:humanoid}   
\end{figure}
\vspace{-5mm}

We aim at jointly learn the RNN-based world model \cite{world_models} and a policy affine in the latent (but not original) state, using evolution strategy optimization methods, recently proven to match or outperform SOTA policy gradient algorithms \cite{salimans}. We conduct stochastic optimization on $\mathcal{O}(d)$ by constraining transition-matrices of RNNs to be orthogonal (see: Sec. \ref{sec:intro}). 

\vspace{-4mm}
\paragraph{Compared methods:} Our algorithm 
($\mathrm{stoch}$-$\mathrm{ortRNN}$) applying matching-based sampling with $h:x \rightarrow |x|$
is compared with three other methods: 
\textbf{(1)} its deterministic variant where exponentials are explicitly computed ($\mathrm{exact}$-$\mathrm{ortRNN}$),
\textbf{(2)} unstructured vanilla RNNs ($\mathrm{vanRNN}$), \textbf{(3)} vanilla RNN with orthogonal initialization of the transition matrix and periodic projections back into orthogonal group ($\mathrm{ortinit}$-$\mathrm{vanRNN}$).
For the most challenging Humanoid task we also trained purely affine policies.

Comparing with \textbf{(3)} enables us to measure incremental gain coming from the orthogonal optimization as opposed to just orthogonal initialization which was proven to increase performance of neural networks \cite{orthogonal_init}. We observed that just orthogonal initialization does not work well (is comparable to vanilla RNN) thus in $\mathrm{ortinit}$-$\mathrm{vanRNN}$ we also periodically every $p$ iterations project back onto orthogonal group. We tested different $p$ and observed that in practice it needs to satisfy $p \leq 20$ to provide training improvements. We present variant with $p=20$ in Fig. \ref{figure:es_rnn_openai} since it is the fastest. Detailed ablation studies with different values of $p$ are given in Table 1.

\vspace{-4mm}
\begin{figure}[H]
    \begin{minipage}{0.49\textwidth}
    \subfigure[\textbf{Reacher:Hard}]{\includegraphics[width=.49\linewidth]{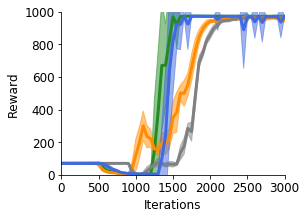}}
    \vspace{-2mm}
    \subfigure[\textbf{Hopper: Stand}]{\includegraphics[width=.46\linewidth]{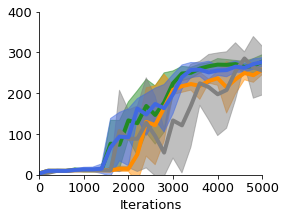}}  
    \end{minipage}
    \begin{minipage}{0.49\textwidth}
    \subfigure[\textbf{Swimmer:15}]{\includegraphics[width=.49\linewidth]{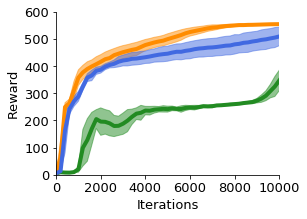}}  
    \subfigure[\textbf{HalfCheetah}]{\includegraphics[width=.48\linewidth]{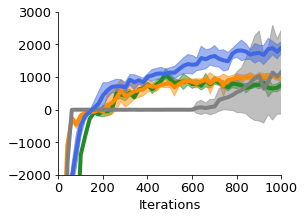}}  
    \vspace{-0.5mm}
    \end{minipage}    
\end{figure}
\vspace{-6mm}
\begin{figure}[H]
\vspace{-1.3mm}
    \vspace{-3mm}  
    \begin{minipage}{0.49\textwidth}
    \subfigure[\textbf{Walker2d}]{\includegraphics[width=.50\linewidth]{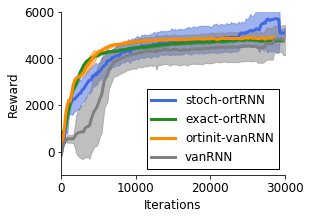}}  
    \subfigure[\textbf{Humanoid}]{\includegraphics[width=.46\linewidth]{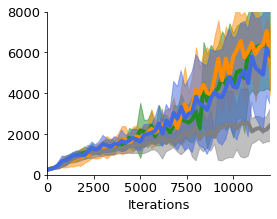}}  
    \vspace{-3mm}
    \caption{\small{Comparison of all RNN-based algorithms on DM Control Suite (a-c) and OpenAI Gym (d-f) tasks: Each plot shows the mean +- one $\mathrm{stdev}$ across $s=10$ seeds. For $\mathrm{Swimmer:15}$ we present only three curves since $\mathrm{vanRNN}$ did not train at all.}}
    \label{figure:es_rnn_openai}
    \end{minipage}        
\end{figure}
\vspace{-5mm}

Experiment with affine policies helps us to illustrate that training nontrivial hidden state dynamics is indeed crucial. Purely affine policies were recently showed to provide high rewards for $\mathrm{OpenAI}$ environments
 \cite{mania}, but we demonstrate that they lead to inferior agent's behaviors, thus their rewards are deceptive. To do that, we simulated all learned policies for the most challenging $376$-dimensional $\mathrm{Humanoid}$ environment (see video library in Appendix).

\vspace{-3.5mm}
\paragraph{Setting:} For each method we run optimization for $s=10$ random seeds (resulting in $240$ experiments)  and hidden state of sizes $h=200,400$ (similar results in both settings). 
For all environments distributed optimization is conducted on $800$ machines.
\vspace{-3mm}
\paragraph{Results:}
In Fig. \ref{figure:humanoid} we show most common policies learned by all four RNN-based algorithms and trained affine policy for the most challenging $\mathrm{Humanoid}$ task. Our method was the only one that consistently produced good walking/running behaviors. Surprisingly, most popular policies produced by $\mathrm{exact}$-$\mathrm{ortRNN}$ while still effective, were clearly less efficient. 
The behavior further deteriorated if orthogonal optimization was replaced by $\mathrm{ortinit}$-$\mathrm{vanRNN}$. Pure vanilla RNNs produced unnatural movements, where agent started spinning and jumping on one leg. For affine policies forward movement was almost nonexistent. 
We do believe we are the first to apply and show gains offered by orthogonal RNN architectures for RL.

We conjecture that the superiority of our method over $\mathrm{exact}$-$\mathrm{ortRNN}$ is partially due to numerical instabilities of standard Riemannian optimization on $\mathcal{O}(d)$ (see: Appendix, Sec \ref{sec:exp_prob}). In the Appendix (Sec. \ref{app:instability}) we demonstrate that it is the case even for very low-dimensional tasks ($d=16$).

In Fig. \ref{figure:es_rnn_openai} we present corresponding training curves for all analyzed RNN-based methods. For all environments $\mathrm{stoch}$-$\mathrm{ortRNN}$ method did well, in particular in comparison to $\mathrm{exact}$-$\mathrm{ortRNN}$.
In Sec. \ref{exp:tc} we show that $\mathrm{stoc}$-$\mathrm{ortRNN}$ is much faster than other methods using ortho-constraints.

\vspace{-3mm}
\subsection{Optimization on $\mathcal{ST}(d,k)$ for Vision-Based Tasks}
\label{sec:cnn}

Here we apply our methods to orthogonal convolutional neural networks \cite{Jia19,HuangLLYWL18, XieXP17, BansalCW18, ortcnn_1}.

\vspace{-3.9mm}
\paragraph{Setting:} For MNIST, we used a 2-layer MLP with each layer of width 100, and $\mathrm{tanh}$ activations specifically to provide a simple example of the vanishing gradient problem, and to benchmark our stochastic orthogonal integrator. The matching-based variants of our algorithm with uniform sampling was accurate enough. For CIFAR10, we used a PlainNet-110 (ResNet-110 \cite{HeZRS16} without residual layers), similar to the experimental setting in \cite{XieXP17}. For a convolutional kernel of shape $[H, W, C_{in}, C_{out}]$ (denoting respectively $[$\texttt{height, width, in-channel, out-channel}$]$), we impose orthogonality on the flattened 2-D matrix of shape $[H * W * C_{in}, C_{out}]$ as commonly used in \cite{Jia19,HuangLLYWL18, XieXP17, BansalCW18, ortcnn_1}. We 
applied our algorithm for $s \geq 2$ and uniform sampling.
Further training details are in the Appendix \ref{appendix_SL}.

\vspace{-3mm}
\paragraph{Results:}In Fig. \ref{mnist}, we find that the stochastic optimizer (when $s=2$) is competitive with the exact orthogonal variant and outperforms vanilla training (SGD + Momentum). In Fig. \ref{plain110}, we present the best test accuracy curve found using a vanilla optimizer ($\mathrm{SGD}$+$\mathrm{Momentum}$), comparable to the best one from \cite{XieXP17} ($66 \%$ test accuracy). As in \cite{XieXP17}, we found training PlainNet-110 with vanilla $\mathrm{Adam}$ challenging. We observe that orthogonal optimization improves training. Even though stochastic optimizers for $s=2$ are too noisy for this task, taking $s>2$ (we used $s=\lceil \frac{d}{\log(d)} \rceil$) provides optimizers competitive with exact orthogonal. Next we present computational advantages of our algorithms over other methods. 

\vspace{-4mm}
\begin{figure}[h]
    \begin{minipage}{1.0\textwidth}
    \includegraphics[width=.45\linewidth]{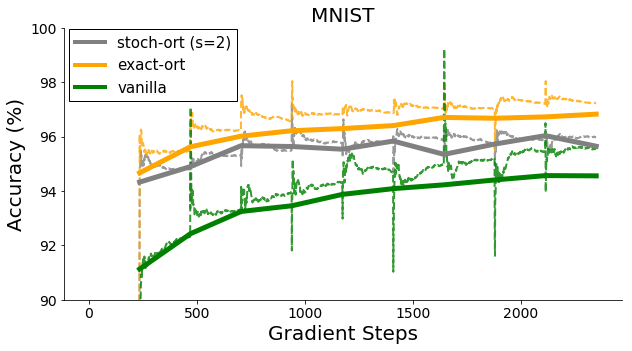}
    \end{minipage}
    \vspace{-4.5mm}
    \caption{Performance of different methods on MNIST. Thin/Bold curves denote Training/Test accuracy respectively.}
\label{mnist}
\end{figure}
\vspace{-4mm}
\begin{figure}[h] 
    \begin{minipage}{1.0\textwidth}
    \includegraphics[width=.45\linewidth]{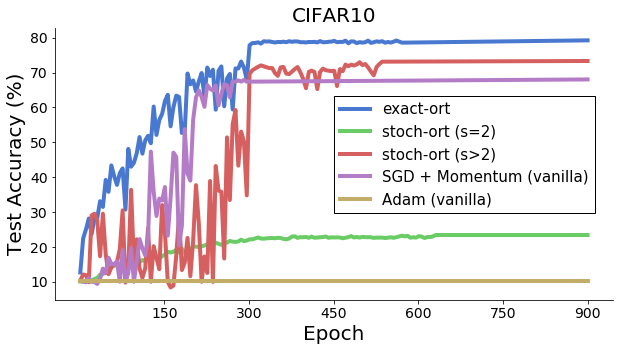}
    \end{minipage}
    \vspace{-4.5mm}    
    \caption{Performance of PlainNet-110 with stochastic optimizer providing improvements over vanilla $\mathrm{SGD}$+$\mathrm{Momentum}$
    }
\label{plain110}
\vspace{-5mm}
\end{figure}

\subsection{Time Complexity}
\label{exp:tc}

\vspace{-4mm}
\begin{center}
\scalebox{0.8}{
 \begin{tabular}{||c | c| c | c | c||} 
 \hline
 \null & \small{$\mathrm{ES}$-$200$} & \small{$\mathrm{ES}$-$400$} & \small{MNIST} & \small{CIFAR} \\ [0.5ex] 
 \hline\hline
 $\mathrm{exact}$-$\mathrm{ort}$ & $>1600$ & $>12800$ & $>200$ & $>49\mathrm{K}$ \\ 
 \hline
  $\mathrm{stoch}$-$\mathrm{ort}$ $(s=2)$  & $\mathbf{<68}$ & $\mathbf{<272}$ & $\mathbf{<8.5}$ & $<2\mathrm{K}$\\ 
 \hline  
 $\mathrm{ortinit}$-$\mathrm{vanRNN}$ $(p=20)$   & $>84$ & $>656$ & N/A & N/A\\   
 \hline  
 $\mathrm{ortinit}$-$\mathrm{vanRNN}$ $(p=10)$   & $>164$ & $>1296$ & N/A & N/A\\ 
 \hline
 $\mathrm{ortinit}$-$\mathrm{vanRNN}$ $(p=8)$   & $>204$ & $>1616$ & N/A & N/A\\  
 \hline
 $\mathrm{ortinit}$-$\mathrm{vanRNN}$ $(p=5)$   & $>324$ & $>2576$ & N/A & N/A\\  
 \hline 
 $\mathrm{ortinit}$-$\mathrm{vanRNN}$ $(p=4)$   & $>404$ & $>3216$ & N/A & N/A\\  
 \hline  
  $\mathrm{stoch}$-$\mathrm{ort}$ $(s=r^{*})$  & N/A & N/A & $<43$ & $\mathbf{<14}$\textbf{K}\\
 \hline 
\end{tabular}}
\end{center}
\vspace{-1mm}
\small Table 1: Comparison of average no of $\mathrm{FLOPS}$ [in $10^{4}$] for orthogonal matrices updates per step for different methods. $\mathrm{ES}$-$h$ stands for the setting from Sec. \ref{sec:rl} with hidden state of size $h$ and $r^{*}=\lceil \frac{d}{\log(d)} \rceil$. Fastest successful runs are bolded.
\normalsize

To abstract from specific implementations, we compare number of $\mathrm{FLOPS}$ per iteration for updates of orthogonal matrices in different methods. See results in Table 1. We see that our optimizers outperform other orthogonal optimization methods, and preserve accuracy, as discussed above.

\vspace{-3mm}
\section{Conclusion}
\vspace{-2mm}
We introduced the first stochastic gradient flows algorithms for ML to optimize on orthogonal manifolds that are characterized by sub-cubic time complexity and maintain representational capacity of standard cubic methods. We provide strong connection with graph theory and show broad spectrum of applications ranging from CNN training for vision to learning RNN-based world models for RL.

\section{Acknowledgements}

Adrian Weller acknowledges support from the David MacKay Newton research fellowship at Darwin College, The Alan Turing Institute under EPSRC grant EP/N510129/1 and U/B/000074, and the Leverhulme Trust via CFI.

\bibliography{ortho}
\bibliographystyle{icml2020}

\newpage
\onecolumn
\section{APPENDIX: Stochastic Flows and Geometric Optimization on the Orthogonal Group}

\subsection{Hyperparameters and Training for CNNs}

\subsubsection{Supervised Learning}
\label{appendix_SL}
In the Plain-110 task on CIFAR10, we performed grid search across the following parameters and values in the orthogonal setting:

\begin{center}
 \begin{tabular}{||c | c||} 
 \hline
 Hyperparameter & Values  \\ [0.5ex] 
 \hline\hline
 learning rate (LR) & \{0.05, 0.1, 0.5\}  \\ 
 \hline
 use bias (whether layers use bias) & \{False, True\}  \\
 \hline
 batch size & \{128, 1024, 8196\} \\
 \hline
 maximum epoch length & \{100, 300, 900\}  \\
 \hline
 scaling on LR for orthogonal integrator & \{0.1, 1.0, 10.0\}  \\ 
 \hline
\end{tabular}
\end{center}

For the vanilla baselines, we used a momentum optimizer with the same settings found in \cite{XieXP17,HeZRS16} (0.9 momentum, 0.1 learning rate, 128 batch size). The learning rate decay schedule occurs when the epoch number is \{3/9, 6/9, 8/9\} of the maximum epoch length.

We also used a similar hyperparameter sweep for the MLP task on MNIST.

\subsubsection{Extra Training Details}
\label{sec:exp_prob}

For the CIFAR10 results from Figure \ref{plain110}, to understand the required computing resources to train PlainNet-110, we further found that \textbf{numerical issues} using the exact integrator could occur when using a naive variant of the matrix exponential. In particular, when the Taylor series truncation  $\sum_{k=0}^{T} \frac{1}{k!} \mathbf{X}^{k}$ for approximating $e^{X}$ is too short (such as even $T = 100$), PlainNet-110 could not reach $\ge 80\%$ \textit{training} accuracy, showing that achieving acceptable precision on the matrix exponential can require a large amount of truncations. An acceptable truncation length was found at T = 200. Furthermore, library functions (e.g. \texttt{tensorflowf.linalg.expm} \cite{higham05}), albeit using optimized code, are still inherently limited to techniques computing these truncations as well.

For the cluster-based stochastic integrators, we set the cluster size for each parameter matrix $\mathbf{M} \in \mathbb{R}^{d, k}$ to be the rounding-up of $\frac{d}{\log d}$. We found that this was an optimal choice, as sizes such as $O(\log d), O(\sqrt{d})$ did not train properly.

\subsection{Orthogonal Optimization for RL - Additional Details}
\label{app:ablation}

We conducted extensive ablation studies to see whether the assumption that one can take upper bound $\tau$ for $\tau^{*}$ (see: Section \ref{sec:algorithm}) of the order 
$O(\frac{s}{d \alpha \beta}) \|h(\Omega)\|_{1}$ for small constants $\alpha^{-1},\beta^{-1}$ is valid. In other words, we want to see whether $\frac{1}{\tau^{*}}$ can be lower-bounded by expressions of the order $\Omega(\frac{d\alpha \beta}{s}\frac{1}{\|h(\Omega)\|_{1}})$.
We took Humanoid environment and the setting as in Section \ref{sec:rl}. Note that by the definition of $\tau^{*}$ we trivially have:
$\frac{1}{\tau^{*}} \geq \rho \frac{1}{\|h(\Omega)\|_{1}}$, 
where $\rho=\frac{\|h(\Omega)\|_{1}}{\gamma_{\Omega}(s,d,h)}$
and $\gamma_{\Omega}(s,d,h)$ is the sum of the $\frac{d}{s} {s \choose 2}$ entries of $h(\Omega)$ with largest absolute values.

In  Fig. \ref{rho_fig} we plot $\rho$ as a function of the number of iterations of the training procedure. Dotted lines correspond to the values $\frac{d}{s}$. The $y$-axis uses $\mathrm{log}$-scale.

We tested different sizes $s=2,4,5,10,20,25,50$ and took the size of the hidden layer to be $200$ (thus $d=200$).
We noticed that for a fixed $s$, values of $\rho$ do not change much over time and can be accurately approximated by constants (in  Fig. \ref{rho_fig} they look almost line the plots of constant functions $y=\mathrm{const}$, even though we observed small perturbations). Furthermore, they can be accurately approximated by renormalized values $\frac{d}{s}$, where renormalization factor $c$ is such that $c^{-1}$ is a small positive constant. That suggests two things: 
\begin{itemize}
    \item $\tau^{*}$ can be in practice upper-bounded by expressions of the form $O(\frac{s}{d \alpha \beta}) \|h(\Omega)\|_{1}$ for small positive constant $\alpha^{-1},\beta^{-1}$ and:
    \item magnitudes of entries of skew-symmetric matrices in applications from Section \ref{sec:rl} tend to be very similar.
\end{itemize}

Of course, as explained in the main body of the paper, those findings enable to further improve speed of our sampling procedures.

\begin{figure}[H]
\label{figure:ablation}
    \begin{minipage}{1.0\textwidth}
    \centering\subfigure[]{\includegraphics[width=.5\linewidth]{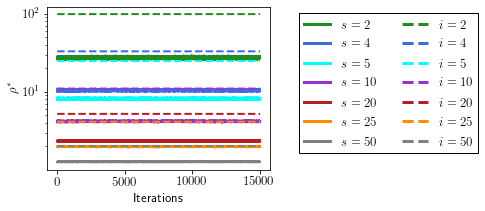}}
    \vspace{-3mm}
    \end{minipage}
\vspace{-4mm}
\caption{\small{Value of $\rho=\frac{\|h(\Omega)\|_{1}}{\gamma_{\Omega}(s,d,h)}$
as a function of the number of iterations of the optimization as in Section \ref{sec:rl} for Humanoid and for different sizes $s=2,4,5,10,20,25,50$. Dotted lines correspond to values $\frac{d}{s}$ that approximate (up to the positive multiplicative constant that is not too small) values of $\rho$. We see that for a fixed $s$, values of $\rho$ almost do not change over the course of optimization and in fact can be accurately approximated by plots of constant functions. We use the $\mathrm{log}$-scale for $y$-axis.}}
\label{rho_fig}
\end{figure}

\subsection{Instability of Deterministic Methods for the Optimization on the Orthogonal Group}
\label{app:instability}

To demonstrate numerical problems of the deterministic optimizers/integrators on the orthogonal group $\mathcal{O}(d)$, we considered the following matrix differential equation on $\mathcal{O}(d)$:
\begin{equation}
\label{sort_de}
\mathbf{\dot{X}}(t)= \mathbf{X}(t)(\mathbf{N}\mathbf{X}(t)^{\top}\mathbf{QX} - \mathbf{X}^{\top}\mathbf{Q}\mathbf{X}(t)\mathbf{N}),  
\end{equation}
where $\mathbf{N} = \mathrm{diag}(n_1,.,,,n_d)$, $\mathbf{Q}=\mathrm{diag}(q_1,...,q_d)$
for some scalars $n_1,...,n_d, q_1,...,q_d \in \mathbb{R}$, and furthermore $n_i \neq n_j$ and $q_i \neq q_j$ for $i \neq j$.
We also assume that $\mathbf{X}(0) \in \mathcal{O}(d)$. Matrix $\Omega(t)=\mathbf{N}\mathbf{X}(t)^{\top}\mathbf{QX} - \mathbf{X}^{\top}\mathbf{Q}\mathbf{X}(t)\mathbf{N}$ is clearly skew-symmetric thus the above differential equation encodes flow evolving on $\mathcal{O}(d)$ (see: Sec. \ref{sec:manifold}).

It can be proven that for all matrices $\mathbf{X} \in \mathcal{O}(d)$, but a set of measure zero the following holds:
\begin{equation}
\mathbf{X}(t) \overset{t}{\rightarrow} \mathbf{P}, 
\end{equation}
where $\mathbf{P}$ is a permutation matrix corresponding to the permutation
$(r_1,...,r_d)$ of $(q_1,...,q_d)$ that maximizes the expression:
\begin{equation}
\label{perm_app_eq}
x_1n_1 + ... + x_dn_d    
\end{equation}
over all permutations $(x_1,...,x_d)$ of $(q_1,...,q_d)$.
Since Expression \ref{perm_app_eq} is maximized for the permutation $(x_1,...,x_d)$
s.t. $x_i < x_j$ iff $n_i < n_j$, we conclude that the flow which is a solution to Eq. \ref{sort_de} can be applied to sort numbers
(e.g. one can take $(n_1,...,n_d)=(1,...,d)$ to sort in the increasing order). 
Furthermore, we can use our techniques to conduct integration.

In our experiments we compared our algorithm (using non-intersecting families with $s=2$) with the deterministic integrator based on exact exponential mapping. We chose $\mathbf{X}(0)$ to be a random orthogonal matrix that we obtained by constructing Gaussian matrix and then conducting Gram-Schmidt orthogonalization and row-renormalization.

\vspace{-3mm}
\begin{center}
\scalebox{0.8}{
 \begin{tabular}{||c | c| c | c | c | c | c | c | c | c | c||} 
 \hline
 \null & $\eta=0.00001$ & $\eta=0.00005$ & $\eta=0.0001$ & $\eta=0.00015$ & $\eta=0.001$ & $\eta=0.0015$ & $\eta=0.01$ & $\eta=0.015$ & $\eta=0.1$ & $\eta=0.15$  \\ [0.5ex] 
 \hline\hline
 $\epsilon$: $\mathrm{stoch}$ & e-13 & 2.0e-12 & 1.5e-12 & 1.8e-12 & 1.65e-12 & 1.2e-12 & 1.78e-12 & 1.3e-12 & 1.45e-12 & 1.3e-12 \\ 
 \hline
 $\mathrm{inv}$: $\mathrm{stoch}$ & 1.0 & 1.0 & 1.0 & 1.0 & 1.0 & 0.8 & 0.67 & 0.6 & 0.59 & 0.58 \\ 
 \hline
$\epsilon$: $\mathrm{exact}$ & e-14 & 2.0e-14 & 1.5e-13 & 1.8e-13 & $\mathrm{nan}$ & $\mathrm{nan}$ & $\mathrm{nan}$ & $\mathrm{nan}$ & $\mathrm{nan}$ & $\mathrm{nan}$\\ 
 \hline  
 $\mathrm{inv}$: $\mathrm{exact}$ & 0.8 & 0.75 & 0.72 & 0.52 & 0.0 & 0.0 & 0.0 & 0.0 & 0.0 & 0.0\\ 
 \hline  
\end{tabular}}
\end{center}
Table 2: Comparison of the stochastic integrator with the exact one on the problem of sorting numbers with flows evolving on $\mathcal{O}(d)$. First two rows correspond to the stochastic integrator and last two to the exact one.
The error $\epsilon$ is defined as: $\epsilon=\|\mathbf{X}_{\mathrm{final}}\mathbf{X}^{\top}_{\mathrm{final}} - \mathbf{I}_d\|_{\mathcal{F}}$.
Value of $\mathrm{inv}$ is the fraction of inverse pairs. For large enough step size exact integrator starts to produce numerical errors that accumulate over time and break integration.

We focused on quantifying numerical instabilities of both methods by conducting ablation studies over different values of step size $\eta>0$. 
For each method we run $n=10$ experiments, each with different random sequence $(q_1,...,q_d)$. We chose: $N=\mathrm{diag}(d,d-1,...,1)$ thus the goal was to sort in the decreasing order.
To conduct sorting, we run $\frac{50.0}{\eta}$ iterations of both algorithms.
Denote by $\mathbf{X}_{\mathrm{final}}$ the matrix obtained by conducting integration. 
We computed $\|\mathbf{X}_{\mathrm{final}}\mathbf{X}^{\top}_{\mathrm{final}} - \mathbf{I}_d\|_{\mathcal{F}}$ to measure the deviation from the orthogonal group $\mathrm{O}(d)$. Matrix $\mathbf{X}_{\mathrm{final}}$ was projected back to the permutation group that was then used to obtain permuted version $(p_1,...,p_d)$ of the original sequence $(q_1,...,q_d)$
The quality of the final result was measured in the number of inverses, i.e. pairs $(p_i, p_j)$ such that $i<j$ but $p_i > p_j$. 
For perfect sorting all the pairs $(p_i, p_j)$ such that $i<j$ are inverses.
As we see in Table 2, if step size is too large exact method produces matrices with infinite field values and the algorithm fails.

\subsection{The Geometry of the Orthogonal Group \& Riemannian Optimization}
\label{app:riemann}

In this section we provide additional technical terminology that we use in the main body of the paper.

\subsubsection{Smooth Curves on Manifolds}
\label{app:smooth}

\begin{definition}[smooth curves on $\mathcal{M}$]
A function $\gamma:I \rightarrow \mathcal{M}$, where $I \subseteq \mathbb{R}$ is an open interval is a smooth curve on $\mathcal{M}$ passing through $\mathbf{p} \in \mathcal{M}$ if there exists $\phi:\Omega_{\mathbf{p}} \rightarrow U_{\mathbf{p}}$ for open subsets $\Omega_{p} \subseteq \mathbb{R}^{d}$, $U_{\mathbf{p}} \subseteq \mathcal{M}$ and $\epsilon>0$ such that the function $\phi^{-1} \circ \gamma:(t-\epsilon,t
+\epsilon) \rightarrow \mathbb{R}^{d}$ is smooth.
\end{definition}

Vectors tangent to smooth curves $\gamma$ on $\mathcal{M}$ passing through fixed point $\mathbf{p} \in \mathcal{M}$ give rise to the linear subspace tangent to $\mathcal{M}$ at $\mathbf{p}$, the \textit{tangent space} $\mathcal{T}_{\mathbf{p}}(\mathcal{M})$ that we define in the main body. 

\subsubsection{Inner Products}
\label{app:inner}

Standard inner products used for $\mathcal{ST}(d,k)$ are: the \textit{Euclidean inner product} defined as: 
$\langle \mathbf{Z}_{1},\mathbf{Z}_{2} \rangle_{\mathrm{e}} = \mathrm{tr}(\mathbf{Z}_{1}^{\top}\mathbf{Z}_{2})$
and the \textit{canonical inner product} given as:
$\langle \mathbf{Z}_{1},\mathbf{Z}_{2} \rangle_{c}=\mathrm{tr}(Z_{1}^{\top}(I-\frac{1}{2}\mathbf{X}\mathbf{X}^{\top}))Z_{2}$ for a tangent space in $\mathbf{X} \in \mathcal{ST}(d,k)$. 

\subsubsection{Representation Theorems for On-Manifold Optimization}
\label{app:representations}

We need the following standard representation theorem:

\begin{theorem}[representation theorem]
\label{repr}
If $\langle \cdot \rangle$ is an inner product defined on the vector space $\mathcal{R}$, then for any linear functional $L:\mathcal{R} \rightarrow \mathbb{R}$ there exists $\mathbf{R} \in \mathcal{R}$ s.t. $\langle \mathbf{R},\mathbf{Q} \rangle=L(\mathbf{Q})$ for any $\mathbf{Q} \in \mathcal{R}$.
\end{theorem}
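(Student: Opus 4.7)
The plan is to proceed by explicit construction using an orthonormal basis, since in the paper's context $\mathcal{R}$ is always a finite-dimensional tangent space of a matrix manifold (so the subtleties of the general Riesz representation in infinite dimensions do not arise). First I would apply Gram--Schmidt to produce an orthonormal basis $\{\mathbf{e}_1,\ldots,\mathbf{e}_n\}$ of $\mathcal{R}$ with respect to $\langle \cdot,\cdot \rangle$, using positive-definiteness of the inner product to guarantee the normalization step never divides by zero.

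Next I would define the candidate representer
\[
    \mathbf{R} \;:=\; \sum_{i=1}^{n} L(\mathbf{e}_i)\,\mathbf{e}_i \;\in\; \mathcal{R}.
\]
To verify the representation identity, I would take arbitrary $\mathbf{Q} \in \mathcal{R}$ and expand it as $\mathbf{Q} = \sum_{j=1}^n \langle \mathbf{e}_j, \mathbf{Q} \rangle \mathbf{e}_j$ (the standard Fourier expansion in an orthonormal basis). Then by bilinearity of the inner product and orthonormality $\langle \mathbf{e}_i, \mathbf{e}_j\rangle = \delta_{ij}$, the computation
\[
    \langle \mathbf{R},\mathbf{Q}\rangle \;=\; \sum_{i,j} L(\mathbf{e}_i)\langle \mathbf{e}_j,\mathbf{Q}\rangle\,\langle \mathbf{e}_i,\mathbf{e}_j\rangle \;=\; \sum_{i} L(\mathbf{e}_i)\langle \mathbf{e}_i,\mathbf{Q}\rangle
\]
collapses cleanly, and by linearity of $L$ this equals $L\bigl(\sum_i \langle \mathbf{e}_i,\mathbf{Q}\rangle\,\mathbf{e}_i\bigr) = L(\mathbf{Q})$, which is what we want.

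For completeness I would note (although the statement does not demand it) that $\mathbf{R}$ is unique: if $\mathbf{R}'$ also satisfied the identity, then $\langle \mathbf{R}-\mathbf{R}',\mathbf{Q}\rangle = 0$ for every $\mathbf{Q}\in\mathcal{R}$, and plugging in $\mathbf{Q}=\mathbf{R}-\mathbf{R}'$ together with positive-definiteness forces $\mathbf{R}=\mathbf{R}'$. There is no real obstacle in the finite-dimensional setting used by the paper; the only place the argument would genuinely require more work is if $\mathcal{R}$ were an infinite-dimensional Hilbert space and $L$ merely continuous, in which case one would instead split $\mathcal{R} = \ker L \oplus (\ker L)^\perp$ and pick $\mathbf{R}$ from the one-dimensional orthogonal complement. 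Since this refinement is not needed here, the Gram--Schmidt construction above suffices.
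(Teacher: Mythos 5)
Your construction is correct: in the finite-dimensional setting that the paper actually needs (tangent spaces of matrix manifolds embedded in $\mathbb{R}^{n}$), Gram--Schmidt produces an orthonormal basis $\{\mathbf{e}_1,\dots,\mathbf{e}_n\}$, the representer $\mathbf{R}=\sum_i L(\mathbf{e}_i)\mathbf{e}_i$ satisfies $\langle \mathbf{R},\mathbf{Q}\rangle=L(\mathbf{Q})$ by bilinearity and orthonormality exactly as you compute, and your uniqueness remark via positive-definiteness is sound. Note, however, that the paper gives no proof of this statement at all: it is invoked as a ``standard representation theorem'' (the finite-dimensional Riesz representation) and immediately applied to identify $L$ with the directional derivative, $\mathcal{R}$ with the tangent space, and $\mathbf{R}$ with the Riemannian gradient. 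So there is nothing to compare against; your argument simply supplies the omitted standard proof. The one caveat worth keeping explicit is the one you already flag yourself: as literally stated, for an infinite-dimensional inner-product space and a merely linear (not continuous) functional the claim is false, so either finite-dimensionality or completeness plus continuity (via the $\ker L \oplus (\ker L)^{\perp}$ splitting you sketch) is genuinely required; within the paper's finite-dimensional usage your proof is complete.
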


To apply the above result for on-manifold optimization, we identify: 
\begin{itemize}
    \item $L$ with the directional derivative operator related to the function $F$ being optimized,
    \item $\mathcal{R}$ with the tangent space,
    \item $\mathbf{R}$ with the Riemannian gradient.
\end{itemize}

\subsection{Theoretical Results for Sampling Algorithms}
\label{sec:theory_sampling}

Below we prove all the theoretical results from Section \ref{sec:algorithm}.

\begin{lemma}
\label{lemma:uniform}
We state two useful combinatorial facts and one of their consequences: 
\begin{itemize}
    \item $|\mathcal{T}_s| = \dfrac{d!}{(s!)^{d/s}(\frac{d}{s})!} $
    \item Each edge appears in $W = \dfrac{(d-2)!}{(\frac{d-s}{s})!(s!)^{\frac{d-s}{s}}(s-2)!}$ tournaments of $|\mathcal{T}_s|$
    \item Therefore, for $p \sim \mathcal{U}(\mathcal{T}_s)$, $\dfrac{1}{p_T}\mathbf{M}_{\mathcal{T}_s} = \dfrac{d-1}{s-1}\mathbf{J}_d$
\end{itemize}
\end{lemma}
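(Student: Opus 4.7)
The plan is to prove the three items in sequence by elementary counting, as each can be reduced to standard formulas for counting unordered set partitions into blocks of equal size. The key observation is that an element $T \in \mathcal{T}_s$ is completely determined by the partition of the vertex set $\{0,1,\ldots,d-1\}$ into $d/s$ unordered blocks of size $s$ (each block corresponds to one connected component $G_{T_k}$), so the combinatorics reduces to counting such partitions.

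For the first item, I would count ordered partitions first: there are $\binom{d}{s}\binom{d-s}{s}\cdots\binom{s}{s} = d!/(s!)^{d/s}$ ways to choose an ordered sequence of $d/s$ disjoint blocks of size $s$. Since the $d/s$ blocks are unordered in the definition of a tournament in $\mathcal{T}_s$, I divide by $(d/s)!$ to obtain $|\mathcal{T}_s| = d!/((s!)^{d/s}(d/s)!)$.

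For the second item, I would fix an edge $\{i,j\}$ and count the elements $T \in \mathcal{T}_s$ with $\{i,j\} \in E(G_T)$, that is, those partitions whose block containing $i$ also contains $j$. The block containing $\{i,j\}$ is determined by picking its remaining $s-2$ vertices from the other $d-2$, giving $\binom{d-2}{s-2}$ choices. The remaining $d-s$ vertices must be partitioned into $(d-s)/s$ unordered blocks of size $s$, which by the first item counts $(d-s)!/((s!)^{(d-s)/s}((d-s)/s)!)$. Multiplying and simplifying yields exactly $W$.

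For the third item, under $p \sim \mathcal{U}(\mathcal{T}_s)$ we have $p_T = 1/|\mathcal{T}_s|$ for all $T$, and $\mathbf{M}_{\mathcal{T}_s}[i,j] = 1/W$ for $i \neq j$ by definition (since the second item shows every edge lies in exactly $W$ tournaments). Hence $\tfrac{1}{p_T}\mathbf{M}_{\mathcal{T}_s} = (|\mathcal{T}_s|/W)\mathbf{J}_d$ (off-diagonal), and all that remains is the algebraic simplification
\[
\frac{|\mathcal{T}_s|}{W} = \frac{d!}{(d-2)!}\cdot\frac{(s-2)!}{s!}\cdot\frac{((d-s)/s)!}{(d/s)!} = d(d-1)\cdot\frac{1}{s(s-1)}\cdot\frac{s}{d} = \frac{d-1}{s-1}.
\]
There is no real obstacle; the only delicate point is bookkeeping the exponents $(d-s)/s$ versus $d/s$ when simplifying the ratio, so I would carry out that cancellation carefully and note that it crucially uses $s \mid d$ so that $(d/s)! = (d/s)\cdot((d-s)/s)!$ and $(s!)^{d/s}/(s!)^{(d-s)/s} = s!$.
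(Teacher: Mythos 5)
Your proof is correct, and on the second item it departs from the paper's argument in a genuine way. The paper derives $W$ by double counting: it uses the symmetry of $\mathcal{T}_s$ to assert that every edge lies in the same number $W$ of tournaments, then equates $\sum_{T \in \mathcal{T}_s} N_T = |\mathcal{T}_s| \cdot \tfrac{d}{s}\binom{s}{2}$ (summing edges per tournament) with $W \binom{d}{2}$ (summing tournaments per edge), and solves for $W$ using the already-known $|\mathcal{T}_s|$. You instead fix the edge $\{i,j\}$ and count directly: $\binom{d-2}{s-2}$ choices for the rest of the block containing $i$ and $j$, times the number of equal-block partitions of the remaining $d-s$ vertices. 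Both are sound; the paper's double-counting is shorter once $|\mathcal{T}_s|$ is in hand and leans on the symmetry observation, while your direct enumeration is more constructive, does not presuppose that all edges are equivalent, and reuses the first item as a subroutine on a smaller vertex set. Your first and third items match the paper's reasoning essentially verbatim (ordered partitions divided by $(d/s)!$, then the ratio $|\mathcal{T}_s|/W = (d-1)/(s-1)$), and your explicit cancellation using $(d/s)! = (d/s)\cdot((d-s)/s)!$ and $(s!)^{d/s}/(s!)^{(d-s)/s} = s!$ is a useful bit of bookkeeping that the paper leaves implicit.
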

\begin{proof}

\begin{itemize}
    \item  To compute $|\mathcal{T}_s|$, we can use the way we sample them: we choose a random permutation and take the $s$ first vertices to be the first connected component, the $s$ next vertices to be the second etc... This way, multiple random permutations will lead to the same tournament. More precisely, exactly $\dfrac{d}{s}!(s!)^{\frac{d}{s}}$ permutations lead to the same tournament. 
 
 Therefore $|\mathcal{T}_s| = \dfrac{d!}{(s!)^{d/s}(\frac{d}{s})!}$.
 \item  By symmetry, we know that each edge appears in the same number of tournaments of $\mathcal{T}_s$. Let $W$ be this number. Let $N_T$ be the number of edges in the tournament $T$. We have that $N_T = \dfrac{d}{s} {s \choose 2}$.
 Therefore $\sum_{T \in \mathcal{T}_s} N_T = |\mathcal{T}_s|\dfrac{d}{s} {s \choose 2}$. We also have that $\sum_{T \in \mathcal{T}_s} N_T = W{d \choose 2}$.
 Therefore $|\mathcal{T}_s|\dfrac{d}{s} {s \choose 2} = W{d \choose 2}$
 That gives:
 \begin{align*}
    W &= \dfrac{d!\dfrac{d}{s}{s \choose 2}}{{d \choose 2}(\dfrac{d}{s})!(s!)^{d/s}}\\
    &= \dfrac{(d-2)!s(s-1)}{(\dfrac{d}{s}-1)!(s!)^{d/s}} \\
    &= \dfrac{(d-2)!}{(\dfrac{d-s}{s})!(s!)^{\dfrac{d-s}{s}}(s-2)!}\\
    W &= \frac{(d-2)!}{(\frac{d-s}{s})!(s!)^{\frac{d-s}{s}}(s-2)!}
\end{align*}
\item For $p \sim \mathcal{U}(\mathcal{T}_s)$, $p_T = \dfrac{1}{|\mathcal{T}_s|}$. Therefore, $\dfrac{1}{p_T}\mathbf{M}_{\mathcal{T}_s} = \dfrac{|\mathcal{T}_s|}{W}\mathbf{J}_d = \dfrac{d-1}{s-1}\mathbf{J}_d$ 
\end{itemize}
\end{proof}

\subsubsection{Proof of Lemma \ref{importance_sampling}}

Below we prove Lemma \ref{importance_sampling} from the main body of the paper.

\begin{proof}
Let $\Omega \in \mathrm{Sk}(d)$ be a skew-symmetric matrix . Fix a family $\mathcal{T}$ of subtournaments of $T(\Omega)$. We aim to show that the distribution $\mathcal{P}$ over $\mathcal{T}$ minimizing the variance $\mathrm{Var}(\widehat{\Omega})=\mathbb{E}
[\|\widehat{\Omega} - \Omega \|^{2}_{\mathcal{F}}]$ among unbiased distributions of the form given by equations \ref{approx} and \ref{eqn:estimator}, satisfies: $p_T \sim \sqrt{\sum_{e \in E(G_T)} w_{e}^{2}}$, where $w_e$ is the weight of edge $e$. 

The constraint on the scalars $\{p_T\}_{T \in \mathcal{T}}$ is simply that the family $\{p_T\}_{T \in \mathcal{T}}$ forms a valid probability distribution. The unbiasedness is guaranteed by the equations \ref{approx} and \ref{eqn:estimator}.

The variance rewrites:
\newcommand{\om}{\widehat{\Omega}}
$$  \mathrm{Var}(\widehat{\Omega}) = \mathbb{E}
[\|\widehat{\Omega} - \Omega \|^{2}_{\mathcal{F}}]  =  \mathbb{E}[\|\widehat{\Omega}\|^{2}_{\mathcal{F}}] - \|\Omega\|^{2}_{\mathcal{F}}\\
$$
Then we consider the following functional,
\begin{align*}
   f(\mathcal{P}) &= \mathbb{E}_{\mathcal{P}}[\|\widehat{\Omega}\|^{2}_{\mathcal{F}}]\\
   &= \sum_{T \in \mathcal{T}} p_T \cdot \left\| \frac{1}{p_{T}}M_{\mathcal{T}} \odot \Omega[T] \right\|_F^2 \\
   &= \sum_{T \in \mathcal{T}} \frac{2}{p_{T}} \sum_{(i,j) \in E(G_T)} M_{\mathcal{T}}[i,j]^2 \cdot \Omega[i,j]^2 & \text{the order $i,j$ does not matter}\\
\end{align*}

We minimize the functional $f$ on the convex open domain $\{ \mathcal{P}=\{p_T\}_{T \in \mathcal{T}} \in  (\mathbb{R}_{>0})^\mathcal{T}, \sum_{T \in \mathcal{T}} p_T = 1 \}$ on which $f$ is convex. The Lagrangian has the form:
\begin{align*}
    L( \mathcal{P} , \lambda) = &\sum_{T \in \mathcal{T}} \frac{2}{p_{T}} \sum_{(i,j) \in E(G_T)} M_{\mathcal{T}}[i,j]^2 \cdot \Omega[i,j]^2 + 2 \lambda \left(\sum_{T \in \mathcal{T}} p_T - 1\right)
\end{align*}
and the global optimum can be found from equations:

\begin{align*}
    \frac{\partial}{\partial p_T} L(\mathcal{P}, \lambda) = -\frac{2}{p^2_T} \sum_{(i,j) \in E(G_T)} M_{\mathcal{T}}[i,j]^2 \cdot \Omega[i,j]^2 + 2\lambda = 0
\end{align*}

We finally obtain the optimal $\mathcal{P}$:
\begin{equation}
    p_T = \frac{\sqrt{\sum_{(i,j) \in E(G_T)} \left(M_{\mathcal{T}} \odot  \Omega\right)[i,j]^2 }}{Z} 
\end{equation} where $Z =  \sum_{T \in \mathcal{T}} p_T$.

We find that the smallest variance is then given by:

$$\text{Var}^*\left(\widehat{\Omega}\right) = 2 \cdot \left(\sum_{T \in \mathcal{T}} \sqrt{\sum_{(i,j) \in E(G_T)} \left(M_{\mathcal{T}} \odot  \Omega\right)[i,j]^2 }\right)^2 - \|\Omega\|^{2}_{\mathcal{F}}$$

In case of homogeneous families, $M_\mathcal{T}$ has identical coefficients and the constant $M_\mathcal{T}$ vanishes into the normalization constant $Z$. 
\end{proof}

\subsubsection{Proof of Lemma \ref{expected_number_of_trials}}

Below we prove Lemma \ref{expected_number_of_trials} from the main body of the paper.
\begin{proof}
Let $A_k$ be the random variable which is $1$ if the $k$th sample is accepted and $0$ otherwise and $T_k$ be the $k$th sampled tournament. $A_k$ are iid Bernoulli variables of parameter $\dfrac{\lambda}{|\mathcal{T}_s|}$.

\begin{align*}
    \pr{A_1 = 1} &= \sum_{T \in \mathcal{T}_s}\pr{A_1 = 1 | T_1 = T}\pr{T_1 = T} \\
    &= \dfrac{1}{|\mathcal{T}_s|} \sum_{T \in \mathcal{T}_s}\pr{A_1 = 1 | T_1 = T} \\
    &= \dfrac{1}{|\mathcal{T}_s|} \sum_{T \in \mathcal{T}_s} q^h_T = \dfrac{\lambda }{|\mathcal{T}_s|} \sum_{T \in \mathcal{T}_s} p^h_T \\
    &= \dfrac{\lambda }{|\mathcal{T}_s|}
\end{align*}

The number of trials before a sample is accepted is $\min \{k | A_k = 1\}$. This random variable follows a Poisson distribution of parameter $\dfrac{|\mathcal{T}_s|}{\lambda}$. Therefore, the expected number of trials before a sample is accepted is $\dfrac{|\mathcal{T}_s|}{\lambda}$

\end{proof}

\subsubsection{Proof of Lemma \ref{probability_formula}}

Below we prove Lemma \ref{probability_formula} from the main body of the paper.
\begin{proof}
By definition, $p^{h}_{T} \sim h(G_T)$. Let call $\alpha$ the proportionality factor. 
Then 
\begin{align*}
    \alpha^{-1} &= \sum_{T \in \mathcal{T}_s}h(G_T) = \sum_{T \in \mathcal{T}_s}\sum_{e \in E(G_T)}h(w_e)\\
    &= \sum_{T \in \mathcal{T}_s}\sum_{i < j}h(w_{(i,j)})\ind{(i,j) \in E(G_T)} \\
    &= \sum_{i <j}h(w_{(i,j)})\sum_{T \in \mathcal{T}_s}\ind{(i,j) \in E(G_T)} \\
     &= W\sum_{i < j}h(w_{(i,j)}) = W\dfrac{1}{2}\sum_{i,j}h(w_{(i,j)}) \\
    &= W\dfrac{1}{2}\sum_{i,j}h(\Omega_{(i,j)}) = \dfrac{W\|h(\Omega)\|_1}{2}
\end{align*}

As $W = \sum_{i < j}h(w_{(i,j)})\ind{(i,j) \in E(G_T)}$. The computation of $W$ is done in the proof of Lemma \ref{lemma:uniform}.
\end{proof}

\subsubsection{Proof of Theorem \ref{iter-time-complexity}}

Below we prove Theorem \ref{iter-time-complexity} from the main body of the paper.
\begin{proof}
The time complexity results is a direct consequence of Lemma \ref{expected_number_of_trials}. We just need to prove that Algorithm \ref{Alg:asebo} returns a sample of $\mathcal{P}^{h}(\mathcal{T}_s)$. We use the random variables $A_k$ and $T_k$ defined in the proof of Lemma \ref{expected_number_of_trials}. Let $A$ be the output of Algorithm \ref{Alg:asebo}.

Let $T \in \mathcal{T}_s$. We have to check that $\pr{A = T} = p^h_T$. For this, we notice that $\{A = T\} = \cup_{k=1}^{+\infty}\{A_k = 1 \cap T_k = T \cap_{i=1}^{k-1} A_i = 0\}$. These events being disjoints, we have:
\begin{align*}
    \pr{A = T} &= \sum_{k=1}^{+\infty} \pr{A_k = 1 \cap T_k = T \cap_{i=1}^{k-1} A_i = 0}\\
     &= \sum_{k=1}^{+\infty} \pr{A_k = 1 \cap T_k = T | \cap_{i=1}^{k-1} A_i = 0}\pr{\cap_{i=1}^{k-1} A_i = 0}\\
     &= \sum_{k=1}^{+\infty} \pr{A_k = 1 \cap T_k = T | \cap_{i=1}^{k-1} A_i = 0}\left(1- \dfrac{\lambda}{|\mathcal{T}_s|}\right)^{k-1}\\
     &= \sum_{k=1}^{+\infty} \pr{A_k = 1 | T_k = T \cap_{i=1}^{k-1} A_i = 0}\dfrac{1}{|\mathcal{T}_s|}\left(1- \dfrac{\lambda}{|\mathcal{T}_s|}\right)^{k-1}\\
     &= \sum_{k=1}^{+\infty} q^h_T\dfrac{1}{|\mathcal{T}_s|}\left(1- \dfrac{\lambda}{|\mathcal{T}_s|}\right)^{k-1}\\
     &= p^h_T\dfrac{\lambda}{|\mathcal{T}_s|}\sum_{k=0}^{+\infty} \left(1- \dfrac{\lambda}{|\mathcal{T}_s|}\right)^{k}\\
     \pr{A = T} &= p^h_T
\end{align*}
Therefore Algorithm \ref{Alg:asebo} samples from $\mathcal{P}^h(\mathcal{T}_s)$.
\end{proof}

\subsubsection{Estimating $\|h(\Omega)\|_1$}
\label{app:mc}
Denote $n={d \choose 2}$.
Consider matrix $h(\Omega) \in \mathbb{R}^{d \times d}$. We will approximate 
$\|h(\Omega)\|_{1}$ as:
\begin{equation}
X = \sum_{i,j} X_{i,j},    
\end{equation}
for $1 \leq i < j \leq d$
and where $X_{i,j}=\frac{n}{r}h(\Omega_{i,j})$ with probability $\frac{r}{n}$ and $X_{i,j}=0$ otherwise. Note that $\mathbb{E}[X]=\|h(\Omega)\|_{1}$ and furthermore the expected number $R$ of nonzero entries $X_{i,j}$ is clearly $r$. Now it suffices to notice that $R$ is strongly concentrated around its mean using standard concentration inequalities (such as Azuma's inequality). Furthermore, for any $a>0$, by Azuma's inequality, we have:
\begin{equation}
\mathbb{P}[X-\mathbb{E}[X] > a] \leq \mathrm{exp}(-\frac{a^{2}}{2(\frac{n}{r})^{2}\sum_{i,j}h^{2}(\Omega_{i,j})}).
\end{equation}

The upper bound is clearly smaller than $\mathrm{exp}(-(\frac{\epsilon \alpha \beta r}{3d})^{2})$ for $a=\epsilon \|h(\Omega)\|_{1}$ and $(\alpha \beta, h)$-balanced $\Omega$. That directly leads to the results regarding approximating $\|h(\Omega)\|_{1}$ by sub-sampling $\Omega$ from the main body of the paper.

\subsection{Variance Results}
\label{app:var}

Below we present variance results of the estimators of skew-symmetric matrices $\Omega$ studied in the main body of the paper.
\begin{lemma}[Variance of $h$-regular estimators]
\label{general_variance}
The variance of an estimator $\hat{\Omega}$ following an $h$-regular distribution over $\mathcal{T}_s$ is $$\mathrm{Var}(\hat{\Omega}) = \dfrac{\|h(\Omega)\|_1}{2W}\sum_{T \in \mathcal{T}_s}\dfrac{\|\Omega[T]\|_\mathcal{F}^2}{h(G_T)}- \|\Omega\|^2_\mathcal{F}$$

\end{lemma}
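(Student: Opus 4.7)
The plan is a short direct computation, exploiting both unbiasedness of $\widehat\Omega$ and the explicit closed form for $p^h_T$ derived in Lemma~\ref{probability_formula}. First, I would use unbiasedness, which holds by the construction in Eq.~\ref{eqn:estimator}, to apply the bias--variance decomposition
\[
\mathrm{Var}(\widehat\Omega) \;=\; \mathbb{E}\bigl[\|\widehat\Omega\|^2_\mathcal{F}\bigr] \;-\; \|\Omega\|^2_\mathcal{F},
\]
so that only the second moment remains to be evaluated.

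Second, I would pin down the matrix $\mathbf{M}_{\mathcal{T}_s}$. By the second item of Lemma~\ref{lemma:uniform}, each undirected edge lies in exactly $W$ tournaments of $\mathcal{T}_s$, so $\mathbf{M}_{\mathcal{T}_s}[i,j] = 1/W$ for all $i \neq j$. Since $\Omega[T]$ is skew-symmetric with vanishing diagonal, the Hadamard product in Eq.~\ref{eqn:estimator} collapses and $\widehat\Omega = \tfrac{1}{p_T W}\Omega[T]$ on the event that $T$ is sampled. Consequently, $\|\widehat\Omega\|^2_\mathcal{F} = \|\Omega[T]\|^2_\mathcal{F}/(p_T W)^2$ on the same event.

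Third, averaging against $p_T$ and then inserting the closed form $p^h_T = 2 h(G_T)/(W\,\|h(\Omega)\|_1)$ from Lemma~\ref{probability_formula} yields
\[
\mathbb{E}\bigl[\|\widehat\Omega\|^2_\mathcal{F}\bigr] \;=\; \frac{1}{W^2}\sum_{T \in \mathcal{T}_s}\frac{\|\Omega[T]\|^2_\mathcal{F}}{p_T} \;=\; \frac{\|h(\Omega)\|_1}{2W}\sum_{T \in \mathcal{T}_s}\frac{\|\Omega[T]\|^2_\mathcal{F}}{h(G_T)},
\]
and subtracting $\|\Omega\|^2_\mathcal{F}$ gives exactly the stated identity.

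The entire argument is essentially bookkeeping and there is no substantive obstacle; the one place to pay attention is the dual role of $W$, which appears both as the per-edge multiplicity in $\mathbf{M}_{\mathcal{T}_s}$ and inside the normalizing constant of $p^h_T$. Correctly cancelling one factor of $W$ against the other is what produces the clean prefactor $\|h(\Omega)\|_1/(2W)$ in the final formula.
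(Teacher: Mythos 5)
Your proof is correct and follows essentially the same route as the paper's: a bias–variance decomposition, identification of $\mathbf{M}_{\mathcal{T}_s}$ as $\tfrac{1}{W}$ off the diagonal so that $\Omega_T = \tfrac{1}{p_T W}\Omega[T]$, and substitution of the closed form for $p^h_T$ from Lemma~\ref{probability_formula}. The only cosmetic difference is that you spell out the intermediate identity $\mathbb{E}\|\widehat\Omega\|^2_\mathcal{F} = \tfrac{1}{W^2}\sum_T \|\Omega[T]\|^2_\mathcal{F}/p_T$ explicitly (and note the harmless diagonal edge case), whereas the paper writes the same chain of equalities directly.
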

\begin{proof}
    \begin{align*}
        \mathrm{Var}(\hat{\Omega}) &= \sum_{T \in \mathcal{T}_s} p^h_T\|\Omega_T\|^2_\mathcal{F} - \|\Omega\|^2_\mathcal{F}\\
        &= \sum_{T \in \mathcal{T}_s} \dfrac{1}{W^2p^h_T}\|\Omega[T]\|_\mathcal{F}^2- \|\Omega\|^2_\mathcal{F}\\
        &= \sum_{T \in \mathcal{T}_s}\dfrac{\|h(\Omega)\|_1}{2Wh(G_T)}\|\Omega[T]\|_\mathcal{F}^2- \|\Omega\|^2_\mathcal{F}\\
        &= \dfrac{\|h(\Omega)\|_1}{2W}\sum_{T \in \mathcal{T}_s}\dfrac{\|\Omega[T]\|_\mathcal{F}^2}{h(G_T)}- \|\Omega\|^2_\mathcal{F}\\
    \end{align*}
\end{proof}
\begin{lemma}
\label{squared_variance}
Let $\hat{\Omega}$ be the $h$-regular estimator over $\mathcal{T}_s$ where $h$ is the squared function. Then $\mathrm{Var}(\hat{\Omega}) = \dfrac{d-s}{s-1}\|\Omega\|^2_\mathcal{F}$
\end{lemma}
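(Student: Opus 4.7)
The plan is to apply Lemma \ref{general_variance} with $h(x) = x^2$ and reduce the sum to a purely combinatorial ratio computed in Lemma \ref{lemma:uniform}.

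First I would substitute $h(x) = x^2$ into the variance formula from Lemma \ref{general_variance}. With this choice we have $\|h(\Omega)\|_1 = \sum_{i,j} \Omega[i,j]^2 = \|\Omega\|_\mathcal{F}^2$ and $h(G_T) = \sum_{e \in E(G_T)} w_e^2$. The key observation is that because $\Omega$ is skew-symmetric, the Frobenius norm of $\Omega[T]$ counts each undirected edge of $G_T$ twice:
\begin{equation*}
\|\Omega[T]\|_\mathcal{F}^2 \;=\; \sum_{\{i,j\} \in E(G_T)} \bigl(\Omega[i,j]^2 + \Omega[j,i]^2\bigr) \;=\; 2 \sum_{e \in E(G_T)} w_e^2 \;=\; 2\, h(G_T).
\end{equation*}
Thus the ratio $\|\Omega[T]\|_\mathcal{F}^2 / h(G_T)$ is the constant $2$, independent of $T$, and the inner sum in Lemma \ref{general_variance} collapses to $2|\mathcal{T}_s|$.

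Next I would plug this back in to obtain
\begin{equation*}
\mathrm{Var}(\hat{\Omega}) \;=\; \frac{\|\Omega\|_\mathcal{F}^2}{2W} \cdot 2|\mathcal{T}_s| \;-\; \|\Omega\|_\mathcal{F}^2 \;=\; \left(\frac{|\mathcal{T}_s|}{W} - 1\right) \|\Omega\|_\mathcal{F}^2.
\end{equation*}
It then remains to evaluate $|\mathcal{T}_s|/W$ using the closed forms from Lemma \ref{lemma:uniform}. A direct simplification using $\frac{d!}{(d-2)!} = d(d-1)$, $\frac{((d-s)/s)!}{(d/s)!} = s/d$, and $\frac{(s!)^{(d-s)/s}}{(s!)^{d/s}} = 1/s!$ yields
\begin{equation*}
\frac{|\mathcal{T}_s|}{W} \;=\; d(d-1) \cdot \frac{s}{d} \cdot \frac{(s-2)!}{s!} \;=\; \frac{(d-1)s}{s(s-1)} \;=\; \frac{d-1}{s-1},
\end{equation*}
from which $\mathrm{Var}(\hat{\Omega}) = \bigl(\frac{d-1}{s-1} - 1\bigr)\|\Omega\|_\mathcal{F}^2 = \frac{d-s}{s-1}\|\Omega\|_\mathcal{F}^2$, as required.

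The proof is essentially a routine computation; no step is a substantial obstacle. The only subtlety worth flagging explicitly is the skew-symmetry of $\Omega$, which makes $\|\Omega[T]\|_\mathcal{F}^2 = 2 h(G_T)$ and is precisely what causes the $T$-dependence to cancel, thereby reducing the variance to the universal combinatorial quantity $|\mathcal{T}_s|/W$ that was already computed in Lemma \ref{lemma:uniform}. Thus the squared-weight sampling distribution yields a closed-form variance that scales linearly with $\|\Omega\|_\mathcal{F}^2$ and decreases in $s$.
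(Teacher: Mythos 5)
Your proof is correct and follows essentially the same route as the paper's: apply Lemma \ref{general_variance} with $h(x)=x^2$, note that skew-symmetry gives $\|\Omega[T]\|_\mathcal{F}^2 = 2h(G_T)$ so the $T$-dependence cancels, and reduce to $|\mathcal{T}_s|/W - 1 = (d-1)/(s-1) - 1$ via Lemma \ref{lemma:uniform}. You merely make explicit the arithmetic of $|\mathcal{T}_s|/W$ that the paper cites by reference.
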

\begin{proof}
Using lemma \ref{general_variance} with $h$ being the squared function gives:
\begin{align*}
    &= \dfrac{\|\Omega\|_\mathcal{F}^2}{W}\sum_{T \in \mathcal{T}_s}1 - \|\Omega\|_\mathcal{F}^2 \quad \text{as $2h(G_T) = \|\Omega[T]\|^2_\mathcal{F}$}\\
    &= \|\Omega\|_\mathcal{F}^2 \left(\dfrac{|\mathcal{T}_s|}{W} - 1\right)\\
     &= \|\Omega\|_\mathcal{F}^2 \left( \dfrac{d-1}{s-1} - 1\right)\quad \text{as seen in the proof of Lemma \ref{lemma:uniform}}
    \intertext{Therefore:}
    \mathrm{Var}(\hat{\Omega}) &= \dfrac{d-s}{s-1}\|\Omega\|_\mathcal{F}^2
\end{align*}
\end{proof}

\begin{lemma}
\label{uniform_variance}
Let $\hat{\Omega}$ be uniformly distributed over $\mathcal{T}_s$. Then $\mathrm{Var}(\hat{\Omega}) = \dfrac{d-s}{s-1}\|\Omega\|^2_\mathcal{F}$
\end{lemma}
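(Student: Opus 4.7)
The plan is to compute $\mathbb{E}\|\hat{\Omega}\|^2_{\mathcal{F}}$ directly and then use the identity $\mathrm{Var}(\hat{\Omega}) = \mathbb{E}\|\hat{\Omega}\|^2_{\mathcal{F}} - \|\Omega\|^2_{\mathcal{F}}$, which holds because $\hat{\Omega}$ is unbiased. The key ingredients are already packaged in Lemma \ref{lemma:uniform}: under the uniform distribution $p_T = 1/|\mathcal{T}_s|$, we have $\frac{1}{p_T}\mathbf{M}_{\mathcal{T}_s} = \frac{d-1}{s-1}\mathbf{J}_d$, and each undirected edge $\{i,j\}$ lies in exactly $W$ tournaments of $\mathcal{T}_s$, with $|\mathcal{T}_s|/W = (d-1)/(s-1)$.

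First I would substitute the estimator definition \eqref{eqn:estimator} to get $\Omega_T = \frac{d-1}{s-1}\Omega[T]$, so that
$$
\mathbb{E}\|\hat{\Omega}\|^2_{\mathcal{F}} = \frac{1}{|\mathcal{T}_s|}\left(\frac{d-1}{s-1}\right)^2 \sum_{T \in \mathcal{T}_s} \|\Omega[T]\|^2_{\mathcal{F}}.
$$
Next I would evaluate the remaining sum by swapping the order of summation: since $\Omega$ is skew-symmetric, $\|\Omega[T]\|^2_{\mathcal{F}} = 2 \sum_{\{i,j\} \in E(G_T)} \Omega[i,j]^2$, so
$$
\sum_{T \in \mathcal{T}_s} \|\Omega[T]\|^2_{\mathcal{F}} = 2 \sum_{i<j} \Omega[i,j]^2 \cdot |\{T \in \mathcal{T}_s : \{i,j\} \in E(G_T)\}| = W\|\Omega\|^2_{\mathcal{F}},
$$
using the edge-counting part of Lemma \ref{lemma:uniform}.

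Combining these, and applying $|\mathcal{T}_s|/W = (d-1)/(s-1)$, I obtain
$$
\mathbb{E}\|\hat{\Omega}\|^2_{\mathcal{F}} = \left(\frac{d-1}{s-1}\right)^2 \cdot \frac{W}{|\mathcal{T}_s|}\,\|\Omega\|^2_{\mathcal{F}} = \frac{d-1}{s-1}\|\Omega\|^2_{\mathcal{F}},
$$
so $\mathrm{Var}(\hat{\Omega}) = \frac{d-1}{s-1}\|\Omega\|^2_{\mathcal{F}} - \|\Omega\|^2_{\mathcal{F}} = \frac{d-s}{s-1}\|\Omega\|^2_{\mathcal{F}}$, as claimed. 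There is no real obstacle here — the whole argument is a bookkeeping exercise once Lemma \ref{lemma:uniform} is invoked; the only thing to be careful about is the factor of $2$ coming from the skew-symmetry (each unordered edge corresponds to two nonzero entries of $\Omega$), which cancels cleanly against the same factor appearing in $\|\Omega\|^2_{\mathcal{F}} = 2\sum_{i<j}\Omega[i,j]^2$. An alternative, essentially one-line proof would observe that the uniform estimator coincides with the $h$-regular estimator for $h(x) = x^2$ (since then $p^h_T \propto \|\Omega[T]\|^2_{\mathcal{F}}/2$ gives a distribution proportional to... actually no, these differ in general) — so the direct approach above is cleanest, and it also matches the formula in Lemma \ref{squared_variance}, giving a useful consistency check.
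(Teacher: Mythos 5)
Your proof is correct and follows essentially the same route as the paper's: both expand $\mathrm{Var}(\hat{\Omega}) = \mathbb{E}\|\hat{\Omega}\|^2_{\mathcal{F}} - \|\Omega\|^2_{\mathcal{F}}$, substitute $\Omega_T = \frac{d-1}{s-1}\Omega[T]$, swap the order of summation over tournaments and edges, and invoke the edge-count $W$ and the ratio $W/|\mathcal{T}_s| = (s-1)/(d-1)$ from Lemma~\ref{lemma:uniform}. Your closing aside is handled correctly: the uniform and squared-$h$-regular estimators are generally different distributions, yet Lemma~\ref{squared_variance} gives the same variance, which is exactly the consistency check you note.
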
\begin{proof}
Let $\hat{\Omega}$ be uniformly distributed over $\mathcal{T}_s$.
We have:
\begin{align*}
    \mathrm{Var}(\hat{\Omega}) &= \sum_{T \in \mathcal{T}_s}\dfrac{1}{|\mathcal{T}_s|}\|\Omega_T\|_\mathcal{F}^2 - \|\Omega\|_\mathcal{F}^2 \\
    &= \dfrac{1}{|\mathcal{T}_s|}\sum_{T \in \mathcal{T}_s}\dfrac{(d-1)^2}{(s-1)^2}\|\Omega[T]\|_\mathcal{F}^2 - \|\Omega\|_\mathcal{F}^2 \\
    &= \dfrac{1}{|\mathcal{T}_s|}\dfrac{(d-1)^2}{(s-1)^2}\sum_{T \in \mathcal{T}_s}\sum_{(i,j) \in T}2\Omega_{i,j}^2 - \|\Omega\|_\mathcal{F}^2 \\
    &= \dfrac{1}{|\mathcal{T}_s|}\dfrac{(d-1)^2}{(s-1)^2}\sum_{T \in \mathcal{T}_s}\sum_{i < j}2\Omega_{i,j}^2\ind{(i,j) \in E(G_T)} - \|\Omega\|_\mathcal{F}^2 \\
    &= \dfrac{1}{|\mathcal{T}_s|}\dfrac{(d-1)^2}{(s-1)^2}\sum_{i < j}2\Omega_{i,j}^2\sum_{T \in \mathcal{T}_s}\ind{(i,j) \in E(G_T)} - \|\Omega\|_\mathcal{F}^2 \\
    &= \dfrac{W}{|\mathcal{T}_s|}\dfrac{(d-1)^2}{(s-1)^2}\sum_{i,j}\Omega_{i,j}^2 - \|\Omega\|_\mathcal{F}^2 \\
    &=\dfrac{d-s}{s-1} \|\Omega\|_\mathcal{F}^2 \quad \text {as $\dfrac{W}{\mathcal{T}_s} = \dfrac{s-1}{d-1}$ as seen in the proof of Lemma \ref{lemma:uniform}}\\
\end{align*}
So $\mathrm{Var}(\hat{\Omega}) = \dfrac{d-s}{s-1} \|\Omega\|_\mathcal{F}^2$
\end{proof}

\subsection{The Combinatorics of Domain-Optimization for Sampling Subtournaments}
\label{app:extra_combinatorica}

In this section we provide additional theoretical results regarding variance of certain classes of the proposed estimators of skew-symmetric matrices $\Omega$ and establish deep connection with challenging problems in graph theory and combinatorics. We will be interested in particular in shaping the family of tournaments $\mathcal{T}$ on-the-fly to obtain low-variance estimators.
Even though we did not need these extensions to obtain the results presented in the main body of the paper, we discuss them in more detail here due to the interesting connections with combinatorial optimization. We will focus here on non-intersecting families $\mathcal{T}$ and $s=2$. Thus the corresponding undirected graphs are just matchings and they altogether cover all the edges of the base complete undirected weighted graph $G_{T(\Omega)}$.

\subsubsection{More on the variance}

We will denote the family of all these matchings as $\mathcal{M}$.
and start with function $h:\mathbb{R} \rightarrow \mathbb{R}$ given as: $h(x)=|x|$. The following is true:

\begin{lemma}[variance of matching-based estimators for non-intersecting families and $h(x)=|x|$]
Given a skew-symmetric matrix $\Omega$ and the corresponding complete weighted graph $G_{T(\Omega)}$ with the set of edge-weights $\{w_{e}\}_{e \in E(G_{T(\Omega)})}$, the variance/mean squared error of the unbiased estimator $\widehat{\Omega}$ applying function $h(x)=|x|$ and family of matchings $\mathcal{M}$ satisfies:
\begin{align}
\label{variance}
\begin{split}
\mathrm{MSE}(\widehat{\Omega})=\mathrm{Var}(\widehat{\Omega}) \\ =
\mathbb{E}[\|\widehat{\Omega}-\Omega\|_{\mathcal{F}}^{2}]=
K\sum_{e \in E(G_{T(\Omega)})}\frac{w_{e}^{2}}{K(e)} - \|\Omega\|_{\mathcal{F}}^{2},
\end{split}
\end{align}
where $K(e)$ stands for the sum of absolute values of weights of the edges of the matching $m \in \mathcal{M}$ containing e and $K$ for the sum of all the absolute values of all the weights. 
\end{lemma}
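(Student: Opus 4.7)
The plan is to reduce to the generic variance identity for unbiased estimators and then exploit the specific simplifications that come from (i) $h(x)=|x|$ and (ii) the non-intersecting matching structure. Since $\widehat{\Omega}$ is unbiased by construction (Equation~\ref{approx}), we have
\begin{equation*}
\mathrm{MSE}(\widehat{\Omega})=\mathrm{Var}(\widehat{\Omega})=\mathbb{E}\bigl[\|\widehat{\Omega}\|_{\mathcal{F}}^{2}\bigr]-\|\Omega\|_{\mathcal{F}}^{2},
\end{equation*}
so the whole problem reduces to evaluating the second moment $\mathbb{E}[\|\widehat{\Omega}\|_{\mathcal{F}}^{2}]$.

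First I would specialize Equation~\ref{eqn:estimator} to the present setting. Because $\mathcal{M}$ is non-intersecting (every edge of $G_{T(\Omega)}$ lies in exactly one matching $m\in\mathcal{M}$), the discussion in Sec.~\ref{sec:sampling_subtournaments} gives $\mathbf{M}_{\mathcal{T}}=\mathbf{J}_{d}$, so $\Omega_{m}=p_{m}^{-1}\Omega[m]$. Skew-symmetry then yields $\|\Omega_{m}\|_{\mathcal{F}}^{2}=2\,p_{m}^{-2}\sum_{e\in m} w_{e}^{2}$. Next I would plug in the $h$-regular probabilities for $h(x)=|x|$: by Definition~\ref{def:hregular}, $p_{m}\propto\sum_{e\in m}|w_{e}|=K(m)$, and because $\mathcal{M}$ covers each edge exactly once, the normalization constant is fixed by $K=\sum_{m}K(m)$, giving $p_{m}=K(m)/K$ (up to the directed/undirected factor of 2 implicit in the definition of $K$).

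Putting these pieces together,
\begin{equation*}
\mathbb{E}\bigl[\|\widehat{\Omega}\|_{\mathcal{F}}^{2}\bigr]=\sum_{m\in\mathcal{M}} p_{m}\|\Omega_{m}\|_{\mathcal{F}}^{2}=2\sum_{m\in\mathcal{M}}\frac{\sum_{e\in m} w_{e}^{2}}{p_{m}}=2K\sum_{m\in\mathcal{M}}\frac{\sum_{e\in m} w_{e}^{2}}{K(m)}.
\end{equation*}
The crucial step is then to swap the order of summation. Since every edge $e$ belongs to a unique matching $m(e)\in\mathcal{M}$ and, by definition, $K(e)=K(m(e))$, we get
\begin{equation*}
\sum_{m\in\mathcal{M}}\frac{\sum_{e\in m} w_{e}^{2}}{K(m)}=\sum_{e\in E(G_{T(\Omega)})}\frac{w_{e}^{2}}{K(e)},
\end{equation*}
and subtracting $\|\Omega\|_{\mathcal{F}}^{2}$ produces the claimed formula.

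This is largely a direct calculation in the style of Lemma~\ref{general_variance}, and the only real subtlety I expect is bookkeeping around the factor of $2$ coming from skew-symmetry and from whether $K$ and $K(e)$ are computed over directed or undirected edges; once the convention in the statement is fixed consistently with $\|h(\Omega)\|_{1}=\sum_{i,j}|\Omega_{ij}|$, the formula follows unambiguously.
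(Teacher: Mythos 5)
Your proposal follows essentially the same route as the paper's proof: write $\mathrm{Var}(\widehat{\Omega})=\mathbb{E}\|\widehat{\Omega}\|_{\mathcal{F}}^{2}-\|\Omega\|_{\mathcal{F}}^{2}$, substitute $\Omega_{m}=p_{m}^{-1}\Omega[m]$ with $\mathbf{M}_{\mathcal{T}}=\mathbf{J}_{d}$ and $p_{m}\propto\sum_{e\in m}|w_{e}|$, and swap the double sum using the fact that the matchings partition the edge set so $K(e)=K(m(e))$. The factor-of-$2$ loose end you flag is genuine and the paper's own proof also slurs over it (it drops the $2$ coming from $\|\mathbf{H}_{i,j}\|_{\mathcal{F}}^{2}=2$ when writing $\|\mathbf{V}_{m}\|_{\mathcal{F}}^{2}$); it reconciles once one reads $K$ in the lemma as the doubly-counted $\|h(\Omega)\|_{1}=\sum_{i,j}|\Omega_{ij}|$ while $K(e)$ is the singly-counted undirected sum over $m(e)$, so that $p_{m}^{-1}=K/(2K(e))$ and your displayed $2K\sum_{m}(\cdot)/K(m)$ becomes exactly $K\sum_{e}w_{e}^{2}/K(e)$.
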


\begin{proof}
We have the following for $\mathbf{V}_{m}$ defined as:
$\mathbf{V}_{m} = \sum_{e \in m} \frac{|a_{i,j}|}{K_{m}}K\mathrm{sgn}(a_{i,j})\mathbf{H}_{i,j}$,where $m$ stands for the matching, and $K_m$ is the sum of weights of matching $m$:
\begin{align}
\begin{split}
\mathbb{E}[\|\widehat{\Omega}-\Omega\|_{\mathcal{F}}^{2}] = 
\mathbb{E}[\|\widehat{\Omega}\|_{\mathcal{F}}^{2}] - \|\Omega\|_{\mathcal{F}}^{2} \\ =
\sum_{m \in \mathcal{M}} p_{m} \|\mathbf{V}_{m}\|_{F}^{2} - \|\Omega\|_{\mathcal{F}}^{2}
= \sum_{m \in \mathcal{M}}p_{m}\sum_{e \in m}
\frac{w_{e}^{2}}{K_{m}^{2}}K^{2} -  \|\Omega\|_{\mathcal{F}}^{2}= \\
K^{2}\sum_{m \in \mathcal{M}} \frac{K_{m}}{K}
\sum_{e \in m}\frac{w_{e}^{2}}{K_{m}^{2}} - \|\Omega\|_{\mathcal{F}}^{2} = \\ 
K\sum_{m \in \mathcal{M}}\frac{1}{K_{m}}\sum_{e \in m} w_{e}^{2} -  \|\Omega\|_{\mathcal{F}}^{2} 
= K\sum_{e \in E(G_{T(\Omega)})}\frac{w_{e}^{2}}{K(e)} - \|\Omega\|_{\mathcal{F}}^{2},
\end{split}
\end{align}
where $p_{m}$ is the probability of choosing matching $m \in \mathcal{M}$, i.e.
$p(m) = \frac{\sum_{e \in m} |w_{e}|}{K}$=$\frac{K_m}{K}$.
\end{proof}

Thus the variance minimization problem reduces to finding a family of matchings $\mathcal{M}$ which minimizes $\sum_{e \in E(G_{T(\Omega)})}\frac{w_{e}^{2}}{K(e)}$. 

Let us list a couple of observations. First, if every matching is a single edge (that would correspond to conducting \textbf{exactly one} multiplication by Givens rotation per iteration of the optimization procedure using an estimator) the variance is the largest. Intuitively speaking, we would like to have in $\mathcal{M}$ lots of heavy-weight matchings. ideally if $\mathcal{M}$ consists of just one matching covering all nonzero-weight edges (the zero-weight edges can be neglected) the variance is the smallest and in fact equals to $0$ since then we take entire matrix $\Omega$. There are lots of heuristics that can be used such as taking maximum weight matching (see: \cite{vazirani}) in $G_{T(\Omega)}$ as the first matching, delete it from graph, take the second largest maximum weight matching and continue to construct entire $\mathcal{M}$. Since finding maximum weight matching requires nontrivial computational time such an approach would work best if we reconstruct $\mathcal{M}$ periodically, as opposed to doing it in every single step of the optimization procedure. Interestingly, it can be shown that this algorithm, even though working very well in practice accuracy-wise, does not minimize the variance (one can find counterexamples with graphs as small as of six vertices). The following is true:

\begin{lemma}[Variance minimization vs. NP-hardness] Given a weighted and undirected graph $G$, the problem of finding a partition of the edges into matchings $\mathcal{M}$ which minimizes $\sum_{e \in E(G)}\frac{w_{e}^{2}}{K(e)}$ is NP-hard.
\end{lemma}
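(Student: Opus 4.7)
The plan is to reduce from a known NP-hard problem, and the natural candidate is the chromatic index (edge chromatic number) of cubic graphs, which is NP-hard by Holyer's theorem. The key observation is that with uniform edge weights, the objective collapses precisely to the number of matchings in the partition.

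Concretely, I would first specialize to the case $w_e = 1$ for all $e \in E(G)$. Then $K(e) = |m(e)|$, where $m(e)$ denotes the unique matching of $\mathcal{M}$ containing $e$, so
\begin{equation*}
\sum_{e \in E(G)} \frac{w_e^2}{K(e)} \;=\; \sum_{m \in \mathcal{M}} \sum_{e \in m} \frac{1}{|m|} \;=\; \sum_{m \in \mathcal{M}} 1 \;=\; |\mathcal{M}|.
\end{equation*}
Thus for unit weights the objective equals the number of parts in the matching partition, and its minimum over all partitions of $E(G)$ into matchings is exactly the chromatic index $\chi'(G)$.

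Next I would invoke Holyer's theorem: deciding whether a cubic (3-regular) simple graph satisfies $\chi'(G) = 3$ (as opposed to $\chi'(G) = 4$, the only other possibility by Vizing's theorem) is NP-complete. Given an arbitrary cubic graph $G$, assigning unit weights to all edges produces a valid instance of our optimization problem in polynomial time, and by the computation above, the optimum value equals $\chi'(G) \in \{3,4\}$. Hence a polynomial-time algorithm for minimizing $\sum_e w_e^2/K(e)$ would decide 3-edge-colorability of cubic graphs in polynomial time, implying $\mathrm{P} = \mathrm{NP}$. This establishes NP-hardness of the weighted minimization problem, since unit-weighted graphs are a special case.

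The only subtlety, and the step I would double-check, is that the reduction is tight in the sense that no degenerate matching issues arise: one must ensure the partition is into nonempty matchings (so that $|m| \geq 1$ and $K(e) > 0$), which is automatic since every edge must be assigned to exactly one matching. I would also confirm that the NP-hardness transfers from the decision version ``is $\chi'(G) \leq 3$?'' to the minimization version in the standard way by noting that the optimum is an integer for unit weights, so a separation threshold of $3.5$ cleanly distinguishes the two cases. No obstacle beyond quoting Holyer's theorem is expected; the main conceptual point is simply recognizing that the variance-minimizing partition generalizes edge coloring.
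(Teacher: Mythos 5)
Your proposal is correct and takes essentially the same approach as the paper: set all edge weights to $1$, observe that the objective then collapses to the number of matchings in the partition (hence is minimized by the chromatic index), and invoke Holyer's NP-completeness theorem. Your additional remarks about restricting to cubic graphs and about the threshold $3.5$ separating the decision cases are fine elaborations but do not change the underlying argument.
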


\begin{proof}
There is a one-to-one correspondence between partitions of the edges into matchings $\mathcal{M}$ and edge-colorings. Thus, we will reduce to the problem of computing the chromatic index of an arbitrary graph $G$, which is known to be NP-complete (see \cite{holyer}).\\ 

Take an arbitrary  $G$ and set all its weights $w_e$ equal to $1$. Then we claim the optimal objective value of the optimization problem is the chromatic index of $G$. Indeed,
\begin{align*}
\sum_{e\in E(G)} \frac{w_e^2}{K(e)} = \sum_{e\in E(G)} \frac{1}{K(e)} =\\ \sum_{e\in E(G)} \frac{1}{\#\{e' \in m: e \in m\}} = \#\mathcal{M}
\end{align*}
(where $\#A$ denotes the cardinality of $A$). Thus the expression which minimizes the sum on the LHS is the smallest possible cardinality of the set $\mathcal{M}$, which is the chromatic index of $G$, and thus we have completed the reduction.
\end{proof}

The above result shows an intriguing connection between stochastic optimization on the orthogonal group and graph theory. Notice that we know (see: Lemma \ref{importance_sampling}) that under assumptions regarding estimator from Lemma \ref{importance_sampling}, the optimal variance is achieved if $p_{m}$ is proportional to the square root of the sum of squares of the weights of all its edges. Thus one can instead use such a distribution $\{p_{m}\}_{m \in \mathcal{M}}$ instead the one generated by function $h$. It is an interesting question whether optimizing family of matchings $\mathcal{M}$ (thus we still focus on the case $s=2$) in such a setting can be done in the polynomial time. We leave it to future work.

\subsubsection{Distributed computations for on-manifold optimization}

The connection with maximum graph matching problem suggests that one can apply distributed computations to construct on-the-fly families $\mathcal{M}$ used to conduct sampling. Maximum weight matching is one of the most-studied algorithmic problems in graph theory and the literature on fast distributed optimization algorithms constructing approximations of the maximum weight matching is voluminous (see for instance: \cite{czumaj},\cite{lattanzi},\cite{assadi}).
Such an approach might be particularly convenient if we want to update $\mathcal{M}$ at every single iteration of the optimization procedure and dimensionality $d$ is very large.

\subsubsection{On-manifold optimization vs. graph sparsification problem}

Finally, we want to talk about the connection with graph sparsification techniques. Instead of partitioning into matchings the original graph $G_T$, one can instead sparsify $G_T$ first and then conduct partitioning into matchings of the sparsified graph. This strategy can bypass potentially expensive computations of the heavy-weight matchings in the original dense graph by those in its sparser compact representation. That leads to the theory of graph sparsification and graph sketches \cite{sachdeva} that we leave to future work.

\subsection{Theorem \ref{osgd} Proof}

\begin{proof}
Consider the $i$-th step of the update rule. Denote $g (\eta) = F ( \exp (\eta \widehat{\Omega}_i) \mathbf{X}_i )$. Then by a chain rule we get
\begin{equation*}
    g' (\eta) = \langle \nabla F (\exp ( \eta \widehat{\Omega}_i ) \mathbf{X}_i) , \exp ( \eta \widehat{\Omega}_i ) \widehat{\Omega}_i \mathbf{X}_i \rangle_{\mathrm{e}}
\end{equation*}
Next we deduce
\begin{align}
    &| g' (\eta) - g' (0) | = | \langle \nabla F (\exp ( \eta \widehat{\Omega}_i ) \mathbf{X}_i) , \exp ( \eta \widehat{\Omega}_i ) \widehat{\Omega}_i \mathbf{X}_i \rangle_{\mathrm{e}} - \langle \nabla F ( \mathbf{X}_i) , \widehat{\Omega}_i \mathbf{X}_i \rangle_{\mathrm{e}} | \nonumber \\
    &= | \langle \nabla F (\exp ( \eta \widehat{\Omega}_i ) \mathbf{X}_i) , \exp ( \eta \widehat{\Omega}_i ) \widehat{\Omega}_i \mathbf{X}_i \rangle_{\mathrm{e}} - \langle \nabla F ( \mathbf{X}_i) , \exp ( \eta \widehat{\Omega}_i ) \widehat{\Omega}_i \mathbf{X}_i \rangle_{\mathrm{e}} + \langle \nabla F ( \mathbf{X}_i) , \exp ( \eta \widehat{\Omega}_i ) \widehat{\Omega}_i \mathbf{X}_i \rangle_{\mathrm{e}} \nonumber \\
    &- \langle \nabla F ( \mathbf{X}_i) , \widehat{\Omega}_i \mathbf{X}_i \rangle_{\mathrm{e}} | \nonumber \\
    &\leq | \langle \nabla F (\exp ( \eta \widehat{\Omega}_i ) \mathbf{X}_i) - \nabla F (\mathbf{X}_i) , \exp ( \eta \widehat{\Omega}_i ) \widehat{\Omega}_i \mathbf{X}_i \rangle_{\mathrm{e}} | + | \langle \nabla F ( \mathbf{X}_i) , \exp ( \eta \widehat{\Omega}_i ) \widehat{\Omega}_i \mathbf{X}_i - \widehat{\Omega}_i \mathbf{X}_i \rangle_{\mathrm{e}} | \nonumber \\
    &\leq \| \nabla F (\exp ( \eta \widehat{\Omega}_i ) \mathbf{X}_i) - \nabla F (\mathbf{X}_i) \|_\mathcal{F} \| \exp ( \eta \widehat{\Omega}_i ) \widehat{\Omega}_i \mathbf{X}_i \|_\mathcal{F} + \| \nabla F ( \mathbf{X}_i) \|_\mathcal{F} \| \exp ( \eta \widehat{\Omega}_i ) \widehat{\Omega}_i \mathbf{X}_i - \widehat{\Omega}_i \mathbf{X}_i \|_\mathcal{F} \label{c-sch} \\
    &= \| \nabla F (\exp ( \eta \widehat{\Omega}_i ) \mathbf{X}_i) - \nabla F (\mathbf{X}_i) \|_\mathcal{F} \| \widehat{\Omega}_i \|_\mathcal{F} + \| \nabla F ( \mathbf{X}_i) \|_\mathcal{F} \| \exp ( \eta \widehat{\Omega}_i ) \widehat{\Omega}_i - \widehat{\Omega}_i \|_\mathcal{F} \label{orth1} \\
    &\leq L \| \exp ( \eta \widehat{\Omega}_i ) \mathbf{X}_i - \mathbf{X}_i \|_\mathcal{F} \| \widehat{\Omega}_i \|_\mathcal{F} + \| \nabla F ( \mathbf{X}_i) \|_\mathcal{F} \| \exp ( \eta \widehat{\Omega}_i ) \widehat{\Omega}_i - \widehat{\Omega}_i \|_\mathcal{F} \label{lipsch} \\
    &= L \| \exp ( \eta \widehat{\Omega}_i ) - \mathbf{I}_d \|_\mathcal{F} \| \widehat{\Omega}_i \|_\mathcal{F} + \| \nabla F ( \mathbf{X}_i) \|_\mathcal{F} \| \exp ( \eta \widehat{\Omega}_i ) \widehat{\Omega}_i - \widehat{\Omega}_i \|_\mathcal{F} \label{orth2} \\
    &\leq L \| \exp ( \eta \widehat{\Omega}_i ) - \mathbf{I}_d \|_\mathcal{F} \| \widehat{\Omega}_i \|_\mathcal{F} + \| \nabla F ( \mathbf{X}_i) \|_\mathcal{F} \| \exp ( \eta \widehat{\Omega}_i ) - \mathbf{I}_d \|_\mathcal{F} \| \widehat{\Omega}_i \|_\mathcal{F} \label{submul}
\end{align}
where a) in transition \ref{c-sch} we use Cauchy-Schwarz inequality, b) in \ref{orth1}, \ref{orth2} we use invariance of the Frobenius norm under orthogonal mappings, c) in \ref{lipsch} we use \ref{lipschitz} and d) in \ref{submul} we use sub-multiplicativity of Frobenius norm. We further derive that
\begin{align*}
    \| \nabla F (\mathbf{X}_i) \|_\mathcal{F} &\leq \| \nabla F (\mathbf{X}_i) - \nabla F( \mathbf{I}_d ) \|_\mathcal{F} + \| \nabla F (\mathbf{I}_d) \|_\mathcal{F} \leq L \| \mathbf{X}_i - \mathbf{I}_d \|_\mathcal{F} + \| \nabla F (\mathbf{I}_d) \|_\mathcal{F} \\
    &\leq L (\| \mathbf{X}_i \|_\mathcal{F} + \| \mathbf{I}_d \|_\mathcal{F}) + \| \nabla F (\mathbf{I}_d) \|_\mathcal{F} = 2 L \sqrt{d} + \| \nabla F (\mathbf{I}) \|_\mathcal{F}
\end{align*}
where we use that $\| \mathbf{X}_i \|_\mathcal{F} = \| \mathbf{I}_d \|_\mathcal{F} = \sqrt{d}$ due to orthogonality. Now we have
\begin{equation} 
    | g' (\eta) - g' (0) | \leq \biggl( (2 \sqrt{d} + 1) L + \| \nabla F ( \mathbf{I}_d ) \|_\mathcal{F} \biggr) \| \widehat{\Omega}_i \|_\mathcal{F} \cdot \| \exp ( \eta \widehat{\Omega}_i ) - \mathbf{I}_d \|_\mathcal{F} \label{lipsch_g1}
\end{equation}

Next, we employ Theorem 12.9 from \cite{gallier} which states that, due to its skew-symmetry, $\widehat{\Omega}_i$ can be decomposed as $\widehat{\Omega}_i = \mathbf{P} \mathbf{E} \mathbf{P}^\top$ where $\mathbf{P} \in \mathcal{O} (d)$ and $\mathbf{E}$ is a block-diagonal matrix of form:
\begin{equation*}
    \mathbf{E} = \begin{bmatrix} \mathbf{E}_1 & & \\ & \dots & \\ & & \mathbf{E}_p \end{bmatrix}
\end{equation*}
such that each block $\mathbf{E}_j$ is either $\begin{bmatrix} 0 \end{bmatrix}$ or a two-dimensional matrix of form
\begin{equation*}
    \mathbf{E}_j = \begin{bmatrix} 0 & - \mu_j \\ \mu_j & 0 \end{bmatrix}
\end{equation*}
for some $\mu_j \in \mathbb{R}$. From this we deduce that
\begin{equation*}
    \exp ( \eta \widehat{\Omega}_i ) - \mathbf{I}_d = \mathbf{P} \mathbf{J} \mathbf{P}^\top
\end{equation*}
where $\mathbf{J}$ is block-diagonal matrix of type
\begin{equation*}
    \mathbf{J} = \begin{bmatrix} \mathbf{J}_1 & & \\ & \dots & \\ & & \mathbf{J}_p \end{bmatrix}
\end{equation*}
where for each $j$ $\mathbf{J}_j = \exp (\eta \mathbf{E}_j) - \mathbf{I}$ where $\mathbf{I}$ is either $\mathbf{I}_1$ or $\mathbf{I}_2$. Hence, for each $j$ $\mathbf{J}_j$ is either $\begin{bmatrix} 0 \end{bmatrix}$ or a two-dimensional matrix of the form
\begin{equation*}
    \mathbf{J}_j = \begin{bmatrix} \cos ( \eta \mu_j ) - 1 & - \sin ( \eta \mu_j ) \\ \sin ( \eta \mu_j ) & \cos ( \eta \mu_j ) - 1 \end{bmatrix}
\end{equation*}

Denote by $\mathcal{J}$ the set of indices $j$ from $\{ 1, \dots p \}$ which correspond to two-dimensional blocks of $\mathbf{E}$ and $\mathbf{J}$. Then
\begin{align*}
    \| \exp ( \eta \widehat{\Omega}_i ) - \mathbf{I}_d \|_\mathcal{F}^2 &= \| \mathbf{P} \mathbf{J} \mathbf{P}^\top \|_\mathcal{F}^2 = \| \mathbf{J} \|_\mathcal{F}^2 = 2 \sum_{j \in \mathcal{J}} \biggl( \sin^2 ( \eta \mu_j ) + (\cos ( \eta \mu_j ) - 1)^2 \biggr) = 4 \sum_{j \in \mathcal{J}} \biggl( 1 - \cos ( \eta \mu_j ) \biggr) \\
    &\leq 2 \sum_{j \in \mathcal{J}} ( \eta \mu_j )^2 = \eta^2 \| \mathbf{E} \|_\mathcal{F}^2 = \eta^2 \| \widehat{\Omega}_i \|_\mathcal{F}^2
\end{align*}
where we use the inequality $1 - \cos x \leq \frac{x^2}{2}$. Therefore we can rewrite Equation \ref{lipsch_g1} as
\begin{align*}
    | g' (\eta) - g' (0) | \leq \biggl( (2 \sqrt{d} + 1) L + \| \nabla F (\mathbf{I}_d) \|_\mathcal{F} \biggr) \| \widehat{\Omega}_i \|^2_\mathcal{F} \cdot | \eta | \leq L_g \cdot | \eta |
\end{align*}
where $L_g = \biggl( (2 \sqrt{d} + 1) L + \| \nabla F (\mathbf{I}_d) \|_\mathcal{F} \biggr) \| \widehat{\Omega}_i \|^2_\mathcal{F}$. We further deduce:
\begin{equation}
    g(\eta) - g(0) - \eta g'(0) = \int^\eta_0 \biggl( g'(\tau) - g'(0) \biggr) d \tau \geq - \int^\eta_0 \biggl| g'(\tau) - g'(0) \biggr| d \tau \geq - \int^\eta_0 L_g | \tau | d \tau = - \frac{\eta^2}{2} L_g \label{g_bound}
\end{equation}
We unfold $g$'s definition, put $\eta = \eta_i$ and rewrite \ref{g_bound} as follows:
\begin{equation} \label{f_bound}
    \eta_i \langle \nabla F (\mathbf{X}_i ) , \widehat{\Omega}_i \mathbf{X}_i \rangle_{\mathrm{e}} \leq F(\mathbf{X}_{i + 1}) - F(\mathbf{X}_i) + \frac{\eta^2_i}{2} L_g
\end{equation}
Recall that from $\widehat{\Omega}_i$'s definition we have that $\mathbb{E} \widehat{\Omega}_i = \Omega_i = \nabla_\mathcal{O} F (\mathbf{X}_i) \mathbf{X}_i^\top$. By taking expectation w.r.t. random $\widehat{\Omega}_i$ sampling at $i$'s step from both sides of Equation \ref{f_bound} we obtain that
\begin{equation*}
    \eta_i \langle \nabla F (\mathbf{X}_i ) , \nabla F_\mathcal{O} (\mathbf{X}_i ) \rangle_{\mathrm{e}} \leq \mathbb{E} F(\mathbf{X}_{i + 1}) - F(\mathbf{X}_i) + \frac{\eta^2_i}{2} \mathbb{E} L_g
\end{equation*}
Since the Riemannian gradient can be expressed as $\nabla F_\mathcal{O} (\mathbf{X}_i) = (\nabla F (\mathbf{X}_i) \mathbf{X}_i^\top - \mathbf{X}_i \nabla F (\mathbf{X}_i)^\top ) \mathbf{X}_i$, we have that
\begin{align*}
    \| \nabla F_\mathcal{O} (\mathbf{X}_i) \|_\mathcal{F}^2 &= \| \nabla F (\mathbf{X}_i) \mathbf{X}_i^\top - \mathbf{X}_i \nabla F (\mathbf{X}_i)^\top \|_\mathcal{F}^2 = \mathrm{tr} \biggl( ( \nabla F (\mathbf{X}_i) \mathbf{X}_i^\top - \mathbf{X}_i \nabla F (\mathbf{X}_i)^\top )^\top \nabla F (\mathbf{X}_i) \mathbf{X}_i^\top \biggr) \\
    &+ \mathrm{tr} \biggl( (\mathbf{X}_i \nabla F (\mathbf{X}_i)^\top - \nabla F (\mathbf{X}_i) \mathbf{X}_i^\top)^\top \mathbf{X}_i \nabla F (\mathbf{X}_i)^\top \biggr) \\
    &= \mathrm{tr} \biggl( \mathbf{X}_i^\top ( \nabla F (\mathbf{X}_i) \mathbf{X}_i^\top - \mathbf{X}_i \nabla F (\mathbf{X}_i)^\top )^\top \nabla F (\mathbf{X}_i) \biggr) + \mathrm{tr} \biggl( \nabla F (\mathbf{X}_i)^\top ( \nabla F (\mathbf{X}_i) \mathbf{X}_i^\top - \mathbf{X}_i \nabla F (\mathbf{X}_i)^\top ) \mathbf{X}_i \biggr) \\
    &= 2 \langle (\nabla F (\mathbf{X}_i) \mathbf{X}_i^\top - \mathbf{X}_i \nabla F (\mathbf{X}_i)^\top ) \mathbf{X}_i, \nabla F (\mathbf{X}_i) \rangle_\mathrm{e} = 2 \langle \nabla F_\mathcal{O} (\mathbf{X}_i), \nabla F (\mathbf{X}_i) \rangle_\mathrm{e}
\end{align*}
where we use that $\mathrm{tr} (\mathbf{A}^\top \mathbf{B}) = \mathrm{tr} (\mathbf{B}^\top \mathbf{A})$ and $\mathrm{tr} (\mathbf{A} \mathbf{B}) = \mathrm{tr} (\mathbf{B} \mathbf{A})$. Hence
\begin{align}
    \eta_i \| \nabla F_\mathcal{O} (\mathbf{X}_i ) \|^2_\mathcal{F} &\leq 2 \biggl( \mathbb{E} F (\mathbf{X}_{i + 1}) - F (\mathbf{X}_i) \biggr) + \eta^2_i \mathbb{E} L_g \\
    &\leq 2 \biggl( \mathbb{E} F (\mathbf{X}_{i + 1}) - F (\mathbf{X}_i) \biggr) + \eta^2_i \biggl( (2 \sqrt{d} + 1) L + \| \nabla F (\mathbf{I}_d) \|_\mathcal{F} \biggr) \sigma^2 \label{grad_bound}
\end{align}

By taking expectation of Equation \ref{grad_bound} w.r.t. $\widehat{\Omega}_i$ random sampling at steps $i=\overline{0..T}$ and summing over all these steps one arrives at
\begin{align*}
    \sum_{i = 0}^T \eta_i \mathbb{E} \| \nabla F_\mathcal{O} (\mathbf{X}_i ) \|^2_\mathcal{F} &\leq 2 \mathbb{E} \biggl( F(\mathbf{X}_{i + 1}) - F (\mathbf{X}_0) \biggr) + \sigma^2 \biggl( (2 \sqrt{d} + 1) L + \| \nabla F (\mathbf{I}_d) \|_\mathcal{F} \biggr) \sum_{i = 0}^T \eta^2_i \\
    &\leq 2 \biggl( F^* - F(\mathbf{X}_0) \biggr) + \sigma^2 \biggl( (2 \sqrt{d} + 1) L + \| \nabla F (\mathbf{I}_d) \|_\mathcal{F} \biggr) \sum_{i = 0}^T \eta^2_i
\end{align*}
Finally we use that
\begin{equation*}
    \biggl[ \sum_{i = 0}^T \eta_i \biggr] \cdot \min_{i = \overline{0..T}} \mathbb{E} \| \nabla F_\mathcal{O} (\mathbf{X}_i ) \|^2_\mathcal{F} \leq \sum_{i = 0}^T \eta_i \mathbb{E} \| \nabla F_\mathcal{O} (\mathbf{X}_i ) \|^2_\mathcal{F}
\end{equation*}
and divide by $\sum_{i = 0}^T \eta_i$ to conclude the proof.
\end{proof}

\subsection{Stochastic Optimization on the Orthogonal Group vs Recent Results on Givens Rotations for ML}
\label{app:givens}

There is an interesting relation between algorithms for stochastic optimization on the orthogonal group $\mathcal{O}(d)$ proposed by us and some results results about applying Givens rotations in machine learning. 

\paragraph{Givens Neural Networks:} In \cite{kamas} the authors propose neural network architectures, where matrices of connections are encoded as \textbf{trained products} of Givens rotations. They demonstrate that such architectures can be effectively used for neural network based policies in reinforcement learning and furthermore provide the compactification of the parameters that need to be learned. Notice that such matrices of connections correspond to consecutive steps of the matching-based optimizers/integrators proposed by us. This points also to an idea of neural ODEs that are constrained to evolve on compact manifolds (such as an orthogonal group).

\paragraph{Approximating Haar measure:} Approximating Haar measure on the orthogonal group $\mathcal{O}(d)$ was recently shown to have various important applications in machine learning, in particular for kernel methods \cite{chor_rowland}
and in general in the theory of Quasi Monte Carlo sequences \cite{gcmc}. Some of the most effective methods conduct approximations through products of random Givens matrices \cite{chor_rowland}. It turns out that we can think about this problem through the lens of matrix differential equations encoding flows evolving on $\mathcal{O}(d)$. Consider the following DE on the orthogonal group:
\begin{equation}
\label{rand_de}
\mathbf{\dot{X}}(t) = \mathbf{X}(t)\Omega_{\mathrm{rand}}(t)
\end{equation}
with an initial condition: $\mathbf{X}(0) \in \mathcal{O}(d)$. It turns out that when $\Omega_{\mathrm{rand}}(t)$ is "random enough" (one can take for instance Gaussian skew-symmetric matrices with large enough standard deviations of each entry or random walk skew-symmetric matrices, where each entry of the upper triangular part is an independent long enough random walk on a discrete $1\mathrm{d}$-lattice $\{0,1,-1,2,-2,...\}$), the above differential equation describes a flow on $\mathcal{O}(d)$ such that for $T \rightarrow \infty$ the distribution of $\mathbf{X}(t)$ converges to Haar measure.
Equation \ref{rand_de} is also connected to heat kernels on $\mathcal{O}(d)$.

Interestingly, if we use our stochastic matching-based methods for integrating such a flow, we observe that the solution is a product of \textbf{random} Givens rotations. Furthermore, these products tend to have the property that vertices/edges corresponding to different Givens rotations do not appear for consecutive elements that often as for the standard method (for instance, every block of Givens rotations corresponding to one step of the integration uses different edges since they correspond to a valid matching). We do believe that such property helps to obtain even stronger mixing properties in comparison to standard mechanism. Finally, these products of Givens rotations can be seen right now as a special instantiation of a much more general mechanism, since nothing prevents us from using our methods with $s>2$ rather than $s=2$ to conduct integration. That provides a convenient way to trade-off accuracy of the estimator versus its speed. We leave detailed analysis of the applications of our methods in that context to future work.


\end{document}